%% file: article_v3.tex
\documentclass[hidelinks,onefignum,onetabnum]{siamart251216}

\input{shared}

\ifpdf
\hypersetup{
  pdftitle={Symmetric NMF},
  pdfauthor={R. Swart, and J. J. Brust}
}
\fi

\begin{document}

\maketitle

% REQUIRED
% with nonnegative rectangular $W$
\begin{abstract}
Symmetric nonnegative matrix factorization (Symmetric NMF) approximates a matrix as $WW^T$ with nonnegative rectangular factor $W$. It has broad applications in graph clustering and machine learning.
In contrast to the NMF, projected gradient methods for the symmetric problem had been associated with slow convergence.
%historically been dismissed due to slow convergence.
To address this, we introduce SNMPBB, the first adaptation of nonmonotone projected Barzilai-Borwein methods to Symmetric NMF, demonstrating that gradient algorithms are significantly more effective than previously understood.
We further extend SNMPBB to graph clustering using the graph Laplacian regularization (Graph-SNMPBB) and to large problems with low-rank approximations (LAI-SNMPBB).
For all variants we prove global convergence to first-order stationary points and also that Barzilai-Borwein curvature information is preserved with randomized approximations.
On synthetic data, SNMPBB achieves 6 times speedup over the alternative SymANLS for similar residuals, with advantages growing at higher ranks.
Across six real-world clustering benchmarks, Graph-SNMPBB matches or exceeds SymANLS accuracy.
Lastly, LAI-SNMPBB outperforms state-of-the-art LAI-SymPGNCG on 34 SuiteSparse matrices in both runtime and residual quality.
\end{abstract}

% REQUIRED
\begin{keywords}
Nonnegative Matrix Factorization, Symmetric Nonnegative Matrix Factorization, projected gradient descent,
Barzilai-Borwein step size, nonmonotone line search
\end{keywords}

% REQUIRED
\begin{MSCcodes}
15A23, 65F55, 90C26, 05C85
\end{MSCcodes}

\section{Background}
%\subsection{NMF}
% \edits{and machine learning that attempts to factor a nonnegative matrix into two with reduced rank}
Nonnegative matrix factorization (NMF) is a widely used tool for sparse data extraction and machine learning.
The nonnegativity constraint naturally aligns with many real-world datasets and allows for direct interpretations (Guo et al. \cite{GuLiLi24}).
Specifically, the NMF represents each data column as a positive combination of nonnegative ``basis'' vectors, allowing the data to be decomposed into a sum of topics or features.
% Each column of the data can be represented as a positive combination of nonnegative ``basis'' vectors.
% Each basis vector may be interpreted as a feature or topic and each coefficient vector can be interpreted as how much each basis vector contributes to a given data column.
Applications of NMF include dimensionality reduction in images (Li et al. \cite{8809826}, and Jing et al. \cite{JiZhNg12}), text documents (Shahnaz et al. \cite{SHAHNAZ2006373}, and Tu et al. \cite{TuEtAl18}), noise (Jaiswal et al. \cite{5946386}), bioinformatics (Pascual-Montano et al. \cite{PascualMontano2006}), and data compression (Kong et al. \cite{KongEtAl17}).
% The method is often effective at uncovering latent structures within data that are not directly observable but still have interpretable meaning using embeddings.

Mathematically, NMF factorizes $V \in \mathbb{R}_{+}^{m\times n} $ into two low rank nonnegative matrices $W \in \mathbb{R}_{+}^{m\times r} $ and $H \in \mathbb{R}_{+}^{r \times n} $, so that $V \approx WH$.
%The following objective function represents this goal:
We define the factorization by %minimizing
\begin{equation}
    \label{nmf_original_objective}
    %f_{\textnormal{nmf}}(W,H) = \left\|V-WH\right\|^2
    \underset{ W, H \ge 0 }{\textnormal{ minimize }}  f(W,H) = \frac{1}{2}\left\|V-WH\right\|^2 
\end{equation}
% where $V \in \mathbb{R}_{+}^{m\times n}$, $W \in \mathbb{R}_{+}^{m\times r},$ and $H \in \mathbb{R}_{+}^{r \times n}$.
Typically $r \ll \min(m,n)$ and the norm in \eqref{nmf_original_objective} is the Frobenius norm.
The problem itself is NP-hard (cf. Vavasis \cite{vavasis2007complexitynonnegativematrixfactorization}), because $f$ is nonconvex in $W$ and $H$. % jointly 
Furthermore, the factorization is not unique; for any $W$, $H$ pair and generalized permutation matrix $D \in \mathbb{R}^{r \times r}$, $WD$ and $D^{-1}H$ are equally valid factors  since $WH = WDD^{-1}H$.
However, when one factor is fixed (i.e., held constant), the problem becomes convex in the other.
Thus, many algorithms use a sequence of alternating subproblems, in which $W^{(i+1)}$ is solved for with fixed $H^{(i)}$ and then fixing this newly-obtained $W^{(i+1)}$ to solve for $H^{(i+1)}$.
% Thus, many algorithms use a sequence of alternating subproblems, in which $W$ is solved for with fixed $H_k$ before fixing $W_k$ to this now-solved value to find $H$.
In fact, this strategy has been widely used in Lee and Seung's multiplicative update \cite{Lee1999} and the ANLS method (Kim and Park \cite{KiPa08}).
%\edits{}
% \jbnote[layout=inline]{Here we need to say something so that readers and editors care in 2026. Perhaps, because randomization has established itself as 
% an enabling technology in machine learning and AI, recent work, Hayashi et al.(2024), exploits randomized sketching to develop new nonnegative matrix factorization methods. Relatedly, the effectiveness of gradient type algorithms in large-scale machine learning and AI drives the development of new algorithms in nonnegative matrix factorizations 
% (citations from Sec. 1.3 and perhaps additional ones). }
%\jbe{Here we need to say things so that readers, editors care now}
%\\

% The NMF continues to attract significant attention because the nonnegativity constraints allow for direct interpretations \cite{GUO2024102379}.
% Each column of $V$ can be represented as a linear combination of a ``basis'' vector in $W$ using a coefficient column in $H$.
% Each basis vector may be interpreted as a feature or topic and each coefficient vector can be interpreted as how much each basis vector contributes to a given column in $V$.
% Applications of NMF include topic identification in text documents \cite{TU2018203}, image processing \cite{6226461}, bioinformatics \cite{PascualMontano2006}, linear clustering, and data compression\cite{info8010026}.
% The method is often effective at uncovering latent structures within data that are not directly observable but still have interpretable meaning using embeddings. %\\

\subsection{Notation}
\label{sec:notation}
We use Householder notation so that a matrix, vector and scalar are denoted as 
$A,a,\alpha$. Unless otherwise specified, the norm $ \| \cdot \| $ represents the Frobenius norm,
$ \| \cdot \| = \| \cdot \|_F $, or the corresponding vector 2-norm. The absolute value of a matrix $ | A | $ represents elementwise absolute values $ | a_{ij} |  $ .   The Frobenius inner product
of two $ m \times n $ matrices $A$ and $B$ is $ \langle A, B \rangle = \sum_{i,j} A_{ij} B_{ij} $,
and therefore $ \| A \|^2 = \langle A, A \rangle $. The gradient of a matrix valued
scalar function $ f: \mathbb{R}^{m \times n} \to \mathbb{R} $ is the matrix 
$ \nabla f \in \mathbb{R}^{m \times n} $. For a function with two variables $f(A,B)$, when one variable is fixed
we use the notation $ f(A,B^{(i)}) := f(A)  $ and $ f(A^{(i)},B) := f(B)  $. The gradients are then denoted
as $ \nabla f(A) $ and $ \nabla f(B) $. The identity matrix is $I$ with the dimension depending on the context. The $i^{\textnormal{th}}$
column of the identity is $e_i$, and  $ e $ represents the vector of all ones. The projection of $A$'s elements onto the nonnegative constraint is
$ \left( \textnormal{Proj}[A] \right)_{ij} = \{ A_{ij} \: \textnormal{if} \: A_{ij} \ge 0; \: \: 0 \: \textnormal{if} \: A_{ij} < 0  \} $. We use a subscript $k$ to denote an inner iteration and superscript 
$ ^{(i)} $ for an outer iteration, e.g., $ A^{(i)}_k  $. Depending on the context we suppress the outer iterations.

\subsection{Symmetric NMF}
Note that when $ V\in \mathbb{R}^{n \times n}$ is symmetric the 
NMF in \eqref{nmf_original_objective} may yield factors $ W $ and $H$ that accurately fit the data, but are otherwise unrelated. Symmetric NMF is a variant of the nonnegative matrix factorization, and requires each factor to equal the other's transpose: $V \approx WW^T$.
$V$ may be the similarity matrix of a Graph and represents pairwise relationships (Kuang et al.~\cite{Kuang2015SymNMF}).
Symmetric NMF has been shown to be equivalent to $K$-means clustering and can cluster nonlinear graph data (see Ding et al.~\cite{dingetal}).
The Symmetric NMF problem, although it concerns only one variable, is in some ways more difficult than minimizing \eqref{nmf_original_objective}.
The objective may be defined as %function is
\begin{equation}
    \label{original_symnmf_objective}
    f(W,W) = \frac{1}{2}\left\|V-WW^T\right\|^2 % \bar{f}(W) = 
\end{equation}
This is quartic in $W$ and also nonconvex (Li et al.~\cite{sym13091757}).
Similar to the unsymmetric problem, there is no unique solution. Unconstrained solvers with additional nonnegativity projections could
be applied to the problem \eqref{original_symnmf_objective} (e.g., \cite{brust2021nonlinear,brust2024trust,brust2024useful} ),
% This paragraph could be clearer
% Symmetric NMF methods perform a soft, or fuzzy, clustering, wherein each data point is given a numerical value for each cluster, not just a label.
% This is a consequence of the factorization not imposing strict orthogonality between columns \cite{Kuang2015SymNMF}.
% For any row that represents a data point, the value of each column represents whether or not the data point belongs to that cluster (the column index is the category). Each column can have a nonzero value for each row of $W$ and therefore membership is not strict for each category or cluster. 
%\\
however because of its ample use in machine learning, prominent methods for Symmetric NMF have emerged. % been developed
Kuang et al.~\cite{Kuang2015SymNMF} proposed a Symmetric Alternating Nonnegative Least Squares (SymANLS) algorithm that has been very successful with larger datasets.
In this method, the standard NMF problem is modified by adding a ``symmetric penalty”: % term and then still solving alternately for one variable:
%These each included a penalty term and formulated the problem
\begin{equation}
    \label{sym_obj_func}
    f(W,H;\lambda) = \frac{1}{2} \left\| V - WH \right\|^2 + \frac{\lambda}{2} \left\| W - H^T \right\|^2
\end{equation}
%A factor of $\frac{1}{2}$ is added for notational convenience in the gradient expressions.
The penalty term induces symmetry for sufficiently large $\lambda$
(Li et al.\cite{9606619}). % and still enables using alternating nonnegative least-squares iterations. 
% Specifically, each subproblem for $W$ is formulated as
% \begin{equation}
%     \label{eq:nnls}
%     \min_{W \geq 0} \left\| \begin{bmatrix} 
%     H^T\\ \sqrt{\lambda} I_r\end{bmatrix}W^T - \begin{bmatrix}
%         V^T\\ \sqrt{\lambda}H
%     \end{bmatrix} \right\|
% \end{equation}
% Solving for $H$ is analogous and when alternated, defines the SymANLS method.
%(see Alg. \ref{alg:symals})
%, which is implemented in algorithm \ref{sym_anls}.
Overall, SymANLS is very robust and has good convergence behavior under mild conditions.
However, it still requires nonnegligible computational expense per iteration due to a sequence of matrix least-squares solves. %\\
The authors also proposed a Newton-based algorithm \cite{Kuang2015SymNMF}, which 
%( see Alg. \ref{alg:symnewton}).
%This approach computes a second-order Taylor expansion of the objective function and solves a sequence of constrained quadratic subproblems. %  to determine descent directions
% For each row of $W$, a reduced-dimensional Newton system is constructed using the local Hessian, and the subproblem is efficiently solved while enforcing nonnegativity.
% The method carefully selects active variables (based on gradients) to reduce problem size and applies line search for globalization.
%This algorithm 
typically converges to higher accuracy but is also computationally much more expensive.
%and faster than other methods, but each iteration is very computationally expensive and it is suitable onle for small- to medium-scale problems.

% Another method is the Symmetric Hierarchical Least Squares, or SymHALS, algorithm by Li et al.~\cite{9606619}.
% The key idea behind SymHALS is to update each column of $W$ individually while holding others fixed. It uses the partial residual $R_j = V - \sum_{i \ne j} w_i w_i^T$ to update each column $ w_j \gets \max\big(0, \frac{R_j w_j}{\|w_j\|^2} \big)  $.
% Modifying each column results in only a rank-one update for $W$; given that this is done in an iterative fashion, the method is very computationally efficient.
% SymHALS is therefore particularly effective for large-scale problems.
% However, given the lack of curvature information, it may converge slower than the Newton-based methods.
%s(see Alg. \ref{alg:symhals})

Recently, Hayashi et al.\cite{HaEtAl25} proposed the Low-rank Approximate Input SymNMF, or LAI-SymNMF approach. %\cite{HaEtAl25}.
This is a class of methods that approximate the input matrix, $V$, through randomized sketching. % , to a lower rank matrix.
Using an approximate singular value decomposition, the authors drastically lowered the per-iteration cost of several NMF algorithms by calculating matrix multiplications using low rank approximations. 
%the results of the decomposition instead of larger multiplies with the original matrix $V$.
They propose LAI-SymPGNCG, which implements the new features and is based on the PGNCG algorithm (Kim and Park, \cite{KiPa08b}). %\\
%In the following we use the notation

\subsection{Projected Gradient for NMF}
%\jbnote[layout=inline]{This section looks overall fine, just see the comment at the bottom. Try to reformulate this.}
% For the NMF (without symmetry) another class of methods is based on gradients.
% Added in here
For the NMF (without symmetry), arguably the best known method is the multiplicative update due to Lee and Seung, \cite{Lee1999}.
Related to this are projected gradient methods which use projections to enforce feasibility.
Instead of solving a sequence of nonnegative least-squares problems based on \eqref{nmf_original_objective}, these methods use projected gradients (PG). % as in \eqref{eq:nnls} 
For the problem $\min_{W \geq 0} f(W,H^{(i)})$, projected gradient descent uses the iteration:
\begin{equation}
    W_{k+1} = \text{Proj}[W_k - \alpha_k \nabla f(W_k)].
    \label{eq:original_pgd}
\end{equation}
%where $\text{Proj}[\cdot]$ denotes projection onto the nonnegative orthant $R^{m \times r}_+$.
The projection is computationally extremely inexpensive and requires only  $mr$ sign checks.
% Similar as alternating least-squares, projected gradient 
% approaches may also exploit the problem structure from \eqref{nmf_original_objective} in order to alternate gradient updates (Lin \cite{Li07}). % \eqref{sym_obj_func}
The computational efficiency of PG usually depends on appropriate step-sizes, because gradient type algorithms are sensitive to scaling and may converge slowly on problems with large curvature (Bertsekas \cite{Bertsekas99}). Since evaluating $f(W,H)$ in \eqref{nmf_original_objective} is often expensive, conventional line-search techniques that require multiple objective evaluations may be impractical.
On the other hand, simple step-size rules such as constant or exponentially
decaying sizes, e.g., $ \alpha_k = \frac{1}{k}, \: k>0  $, may not work well for highly nonlinear problems. A scaling rule that has proven effective
is the Barzilai and Borwein (BB) step-size (Barzilai and Borwein \cite{BaBo88}, \cite{BrBuErMa19}, \cite{brust2021compact}). % also Brust et al.
%
% For the vector problem $ \textnormal{min}_{W \in \mathbb{R}^{m r}} f(W)$ one of two BB steps is
% \begin{equation}
%     \alpha_k = \frac{\langle S_{k-1}, Y_{k-1}\rangle}{\langle S_{k-1}, S_{k-1} \rangle}
%     \label{eq:bb}
% \end{equation}
% where $S_k = W_k - W_{k-1}$ and $Y_k = \nabla f(W_k) - \nabla f(W_{k-1})$ \cite{BaBo88}.
% A simple update method using this BB step size is (\ref{eq:original_pgd})
% % \begin{equation}
% %     x_{k+1} = \textnormal{Proj}[x_k - \alpha_k \nabla f(x_k)]
% % \end{equation}
% (see e.g., Han et al.~\cite{HaNePr09}). %\\
%
For the NMF a method using this BB step size is the Quadratic Regularization Projected Barzilai–Borwein (QRPBB) algorithm outlined in Huang et al. \cite{Huang2015Quadratic}.
This method incorporates the above update rule after finding $Z_k$ where $Z_k$ is the solution to the quadratically regularized and strongly convex subproblem 
\begin{equation}
    \underset{ Z \ge 0 }{ \textnormal{ minimize } } f(W_k) + \langle\nabla f(W_k), Z-W_k\rangle + \frac{\rho}{2} \left\| Z - W_k \right\|^2 % omega
\end{equation}
Here $\rho>0$ is a scaling for the size of the quadratic regularization. % omega
% \begin{equation}=
%     \min_{W \in \mathbb{R}^{m \times r}_+} \phi(W_k, W) = f(W_k) + \langle\nabla f(W_k), W-W_k\rangle + \frac{L}{2} \left\| W - W_k \right\|^2
% \end{equation}
A line search is then performed on the descent direction created by taking a gradient descent step from $Z_k$ with the Barzilai-Borwein step-size, i.e.,  $D_k = \text{Proj}[Z_k - \alpha_k \nabla f(Z_k)] - Z_k$ to update $W$ as $W_{k+1} = Z_k + \beta_k D_k$
with step-size $ \beta_k > 0 $.
A modification to QRPBB is the Nonmonotone Projected Barzilai-Borwein (NMPBB) algorithm, which adds a relaxation factor $ \varphi > 0$ for $W_{k+1} = Z_k + \varphi \beta_k D_k$ as well as a nonmonotone line search (Li et al. \cite{sym16020154}).
%
% In NMPBB, sufficient decrease is determined by monitoring $F_k = f(W_k) + \eta_{k-1}(F_{k-1} -f(W_k))$ with $F_0 = f(W_0)$.
% The line-search is therefore looking for $\beta_k$ such that $F_{k+1} \leq F_k + \theta_k \beta_k \langle \nabla f(Z_k), D_k \rangle$ with parameter $\theta_k$.
%
QRPBB has been shown to converge faster than established methods like NeNMF (Guan et al. \cite{GuTaLuYu12}), PG (Lin \cite{Li07}) and APBB2 (Han et al. \cite{HaNePr09}) while
NMPBB outperformed QRPBB (see the experiments in \cite{Huang2015Quadratic} and \cite{sym16020154}).
A pseudocode of NMPBB is in Algorithm \ref{nmpbb}.
Surprisingly, vanilla projected gradient for the \emph{symmetric} NMF has been shown to converge significantly slower than for the unsymmetric factorization
(Zhu et al.~\cite{Zhu2018DroppingSF}).

\subsection{Contributions}
\label{sec:contrib}
% \edits{Is it ok that we are repeating a lot of things from the previous sections verbatim and in close proximity?}
% Despite the effectiveness of nonmonotone projected gradient methods for unsymmetric NMF, these techniques have not been applied to the symmetric variant.
Projected gradient methods have traditionally been overlooked for symmetric NMF due to slow convergence and poor robustness compared to alternating least-squares approaches (Zhu et al. \cite{Zhu2018DroppingSF}, Hayashi et al. \cite{HaEtAl25}).
This work demonstrates that with proper adaptations projected gradients for symmetric NMF may outperform established alternatives in both speed and accuracy.
To this end, we introduce SNMPBB, the first adaptation of alternating nonmonotone Barzilai-Borwein methods to symmetric NMF.
Different from naive symmetric extensions that set $H = W^T$, we employ a penalty-based coupling that maintains two variables while enforcing symmetry through gradients.
For graph clustering, we extend this to Graph-SNMPBB by incorporating graph Laplacian regularization to address sometimes poor soft-clustering behavior.
Furthermore, we prove global convergence to first-order stationary points under this penalty-based formulation and establish that graph regularization preserves this convergence.
Moreover, we interface the method with the recent Low-rank input (LAI) methodology (Hayashi et al.~\cite{HaEtAl25}) and show that the BB step-size is unchanged by the low-rank approximation error.
%with the gradient bias bounded above.
For synthetic matrices, SNMPBB has a 6 times speedup over SymANLS for convergence and 2 times speedup over SymNewton while consistently achieving a better relative residual.
Across six real-world benchmarks, Graph-SNMPBB requires far less time than SymANLS to reach high clustering accuracy.
Further, LAI-SNMPBB outperforms state-of-the-art LAI-SymPGNCG on 34 SuiteSparse matrices in both speed and residual quality; we also found that early stopping of inner iterations (3-5 steps) prevents overfitting to approximation error.

Figure~\ref{fig:reuters_motivation} provides an illustrative example of our algorithm: on the MNIST dataset, Graph-SNMPBB achieves competitive clustering accuracy in substantially less time than a standard PGD algorithm.

\begin{figure}
    \centering
    \hspace*{-12mm}
    \begin{tikzpicture}
\begin{axis}[
    xlabel={Time (sec)},
    ylabel={Accuracy},
    grid=none,
    width=.5\linewidth,
    height=5.5cm,
    mark repeat=10,
    xmin=0.14
]
\addplot+[mark=*, color=darkgray, thick,mark options={fill=gray, draw=darkgray}] table[x index=0, y index=1, col sep=comma] {data/2k2k.csv};
% \addplot+[mark=square*, color=gray, thick,mark options={fill=lightgray, draw=gray}] table[x index=0, y index=2, col sep=comma] {data/2k2k.csv};
\addplot+[mark=triangle*, color=lightgray, thick,mark options={fill=lightgray, draw=gray}] table[x index=0, y index=3, col sep=comma] {data/2k2k.csv};
\addplot+[mark=star, color=lightgray, thick,dashed,mark options={fill=black!20, draw=lightgray}] table[x index=0, y index=4, col sep=comma] {data/2k2k.csv};

\end{axis}
    \end{tikzpicture}
    \vspace{5pt}
\begin{tikzpicture}
\begin{axis}[
            hide axis,
            xmin=0, xmax=1,
            ymin=0, ymax=1,
            legend columns=5,
            legend style={
                at={(0.8,1.05)}, % 0.5
                anchor=south,
                draw=none,
                /tikz/every even column/.append style={column sep=0.5cm}
            },
        ]
\addlegendimage{mark=*, color=darkgray, thick, mark options={fill=gray}}
\addlegendentry{Graph-SNMPBB (ours)}
\addlegendimage{mark=triangle*, color=lightgray, thick, mark options={fill=gray}}
\addlegendentry{PGD}
\addlegendimage{mark=star, color=lightgray, thick, dashed, mark options={fill=black!20}}
\addlegendentry{Modified PGD}
\end{axis}
\end{tikzpicture}

    \caption{Clustering accuracy vs.\ time on MNIST averaged over 10 runs.
    Graph-SNMPBB reaches near-peak accuracy faster than both PGD \cite{zhang2023} and Modified PGD (using a larger step-size), motivating the approach developed in this paper.}
    \label{fig:reuters_motivation}
\end{figure}

% For the symmetric nonnegative matrix factorization, we propose a new projected gradient method. In particular, we leverage effective strategies developed for unsymmetric NMF, such as nonmonotone line search, Barzilai-Borwein step-sizes, and quadratic regularization. 
% We first modify algorithm NMPBB to solve symmetric problems using a penalty term to enforce symmetric constraints.
% To be useful for graph clustering, we further modify the method by adding a new regularization term and preprocessing of the input matrix. 
% An analysis proves global convergence of the method and a desirable curvature property for approximate inputs.    
% In a comprehensive set of experiments we show that the new method is less expensive when compared to well known algorithms for symmetric NMF, such as SymANLS and SymNewton.
% Furthermore, we interface the method with the recent LAI methodology (Hayashi et al.~\cite{HaEtAl25}) and demonstrate advantages against state-of-the-art LAI-derived procedures.

\section{Algorithm}

\subsection{SNMPBB}
A fast and effective method for the nonnegative factorization in \eqref{nmf_original_objective} is the % matrix 
nonmonotone projected Barzilai-Borwein method (NMPBB). Algorithm \ref{nmpbb} summarizes iterates for $W_{k+1}$, while $H_{k+1}$ may be computed equivalently.
The parameter $\eta_k$ accounts for the previous objective function values and $ \theta_k $ defines a sufficient decrease condition in the nonmonotone line search (this may be constant, e.g., $10^{-3}$).
% \begin{equation}
% \label{eq:func_hist}
% F_k = 
% \begin{cases}
%       f(W_0) & \text{if}\; k=0\\
%       f(W_k) + \eta_{k-1}(F_{k-1}-f(W_k)) & \text{if}\; k\geq1\\
% \end{cases}
% \end{equation}
The parameter $ \varphi $ is the ``relaxation" factor that increases the descent magnitude.
Furthermore, $\alpha_{\text{min}}$ and $\alpha_{\text{max}}$ are the fixed constants for Barzilai-Borwein calculations.

 % \begin{center}
 % \setlength{\fboxsep}{-1pt}
 % \setlength{\fboxrule}{1pt}
 % \fbox{
 
%  \begin{minipage}[t]{0.85\textwidth}
% \vspace{0.3cm}
% \raggedright \noindent\textbf{Algorithm 2.1:} Nonmonotone Projected Barzilai-Borwein (NMPBB) \\
% (Symmetric when using $f$ from \eqref{sym_obj_func} : (SNMPBB))
% \rule{10.85cm}{1pt}
\begin{algorithm}[tb]
\caption{Nonmonotone Projected Barzilai-Borwein (NMPBB) } %\\
%(Symmetric when using $f$ from \eqref{sym_obj_func} : (SNMPBB))}
\label{nmpbb}
\vspace{0.2cm}
\begin{algorithmic}[1]
\label{nmpbb_alg}
\vspace{-0.2cm}
\REQUIRE $ \!\! \nabla f(W)$, $\!W_0\!=\!W^{(i)}$, $\!H_0 \!=\! {H^{(i)}}$, $\!\eta_0\!=\!\eta^{(i)}$, $ \!\theta_0 \!=\! \theta^{(i)} $, $\varphi$, $0 < \alpha_{\text{min}} < \alpha_{\text{max}}$  %$\mu$
\FOR{$k = 0,1,2,\dots$}
  \STATE Compute projected gradient step
    $$Z_k = \text{Proj}\bigl[W_k - \tfrac1\rho_0 \nabla f(W_k)\bigr],\quad \rho_0=\|H_0 H_0^T\|_2\,. $$
    
  \STATE Get descent direction $$D_k = \text{Proj}\bigl[Z_k - \alpha_k \nabla f(Z_k) \bigr] - Z_k$$ where
  $$\alpha_k = \min \bigl\{ \alpha_{\text{max}}, \max \bigl\{ \alpha_{\text{min}}, \frac{\langle S_{k-1}, S_{k-1}\rangle}{\langle S_{k-1}, Y_{k-1}\rangle} \bigr\} \bigr\}$$
  and $S_{k-1} = W_k - W_{k-1}$ and $Y_{k-1} = \nabla f(W_k) - \nabla f(W_{k-1})$

  \STATE Nonmonotone line search. Find $\beta$ 
  %with $F_k$ from \eqref{eq:func_hist} 
  s.t. $$f(Z_k + \beta D_k) \leq F_k + \beta \theta_k \langle\nabla f(Z_k), D_k\rangle$$
  where
  $$
  F_k = 
\begin{cases}
      f(W_0) & \text{if}\; k=0\\
      f(W_k) + \eta_{k-1}(F_{k-1}-f(W_k)) & \text{if}\; k\geq1\\
\end{cases}
  $$
  Then let $W_{k+1} = Z_k + \varphi \beta D_k$
\ENDFOR
%\UNTIL{Convergence criteria satisfied}
%\vspace{0.1cm}
\end{algorithmic}
\end{algorithm}
% \label{nmpbb}
%\end{algorithm}
 % \end{minipage}}
 % \end{center}

%\vspace{0.3cm}
To find $\beta > 0$ in the linesearch, typically a backtracking approach
is used. I.e., starting from $ \beta^{(0)} = 1$ the step length is successively reduced until the sufficient decrease condition in line 4 is satisfied.
Note that NMPBB is designed for the NMF, and thus for two variables. The
symmetric NMF, however, has only one variable and thus may lend itself to a 
simpler, albeit naive approach. Consider Algorithm \ref{nmpbb_alg} in
the form of a function: $ \textnormal{nmpbb}(W^{(i)},H^{(i)},H^{(i)}{H^{(i)}}^T,\eta^{(i)},\theta^{(i)},\varphi,\alpha_{\text{min}},\alpha_{\text{max}}) $.
Thus an immediate approach is to simply compute the updates as  
$ \textnormal{nmpbb}(W^{(i)}, \allowbreak W^{(i)}, \allowbreak W^{(i)}{W^{(i)}}^T, \allowbreak \eta^{(i)},\allowbreak \theta^{(i)}, \allowbreak \varphi,\alpha_{\text{min}},\alpha_{\text{max}}) $ for the symmetric factorization and set $H^{(i+1)} \gets {W^{(i+1)}}^T $ after each call. Nonetheless, this approach performs poorly in practice, possibly due to the symmetric assignments effectively reducing the degrees of freedom (i.e., fewer variables to fit the data).

% Initially, an alternating scheme was used to induce symmetry.
% After each NMPBB solve, the transpose of the newly solved $W$ or $H$ was then assigned to the old $H$ or $W$, respectively.
% Overall, this performed very poorly, likely due to each "symmetric assignment" overwriting the progress made by the NMPBB solve against the original $W$ or $H$.\\
Instead, we develop a new approach that keeps two variables but couples them to obtain symmetric factorizations. In particular, we use a quadratic penalty similar to Li et al. \cite{9606619}, and adapt it for a gradient, i.e., NMPBB type, algorithm. This means using an objective such as \eqref{sym_obj_func},
with a penalty parameter $ \lambda \ge 0 $.
%A modified solver was thus created using the symmetric penalty as described in Li et al.and the objective function in equation \ref{sym_obj_func} \cite{9606619}.
% The value of $\lambda$ depends on the size and norm of $V$; typically, for a larger matrix, $\lambda$ should be higher.
% For example, a value of 1 shows good results for $V \in R^{300 \times 300}$.
% However, this value varies and typically increases as the algorithm progresses in real implementation.\\

We use Algorithm \ref{nmpbb_alg} with objective $f$ given by \eqref{sym_obj_func} and the new gradients
\begin{gather}
\label{eq:snmpbb_grada}
    \nabla_W f(W,H^{(i)};\lambda) = W H^{(i)} {H^{(i)}}^T - V{H^{(i)}}^T + \lambda W - \lambda {H^{(i)}}^T \\
 \label{eq:snmpbb_gradb}
    \nabla_H f(W^{(i)},H;\lambda) = {W^{(i)}}^T W^{(i)} H - {W^{(i)}}^T V + \lambda H - \lambda {W^{(i)}}^T
\end{gather}
%This approach provides state-of-the-art results for certain values of $\lambda$ on synthetic data.
We call this algorithm SNMPBB. For even large matrices, SNMPBB converges quickly and to similar residual values as algorithms such as SymANLS. However, for graph clustering, SNMPBB alone converges slowly, even with special initializations. 
% In the following we use the notation that for a fixed $H_k$ 
% \begin{equation}
% \label{eq:deff}
% f(W,H_k) := f(W)
% \end{equation}
% and therefore $ \nabla_W f(W,H_k) := \nabla f(W) $.

\subsection{Graph-SNMPBB}
\label{sec:reg}
Graph clustering is an important application, since the adjacency
matrix of an undirected graph (i.e., all edges are bidirectional) is 
symmetric and nonnegative. In particular, any clustering problem may
be formulated as a graph clustering problem by preprocessing the data with a 
$K$-nearest neighbor algorithm. However, applying the NMF directly to 
a graph may yield poor performance, because the factorization is known
to result in so-called soft-clustering (Gao et al. \cite{8637461}). Nonetheless,
it has been shown that sparsity and inclusion of a graph's geometric structure
may help to increase clustering quality (Berahmand et al.~\cite{BERAHMAND2024127041}).
% Any of the standard symmetric NMF methods described earlier do not necessarily produce factorizations that have clear graph structures.
% Although the soft clustering is helpful in some contexts, it may lead to incorrect groups for some points and be a hindrance for convergence.
% That is, there is no regularization and clustering may be only basic for graphs \cite{8637461}.
% It is shown that regularization based on sparseness and the geometric structure of graphs helps to maintain clustering quality in addition to the symmetric objective \cite{BERAHMAND2024127041}.
% Therefore, Cai proposed enforcing additional orthoganilty by promoting sparseness in $H$ \cite{5674058} with the following further modified problem
Therefore, we propose to enforce additional orthogonality and sparsity with
the following objective
\begin{equation}
    \label{eq:gsnmp}
    f(W,H;\lambda,\gamma) = \frac{1}{2} \left\| V - W H \right\|^2 + \frac{\lambda}{2} \left\| W - H^T \right\|^2 + \frac{\gamma}{2} \text{tr}(HLH^T)
\end{equation}
where $L$ is the graph Laplacian derived from the problem, $L = I - D^{-1/2} V D^{-1/2}$ and $D$ is the diagonal degree matrix of $V$, i.e., $ d_{ii} = e^T_i V e_i $, $ 1 \le i \le n $. Note that the use of the graph Laplacian for the NMF has been
described before (see Cai et al. \cite{5674058}), however, the authors develop a multiplicative update rule 
% , which is typically less effective than the nonmonotone line-search in this work.
and do not explicitly enforce the symmetry of the problem.
We point out that \eqref{eq:gsnmp} introduces the additional sparsity parameter $\gamma \ge 0 $.
% sparsity parameter as outlined in the above objective function and also implement the Laplacian for regularization as outlined in Cai et al.
Importantly, \eqref{eq:gsnmp} may be 
effectively used with Algorithm \ref{nmpbb}
by developing the extended gradients
\begin{gather}
    \nabla_W f(W,H^{(i)};\lambda,\gamma) = WH^{(i)}{H^{(i)}}^T-V{H^{(i)}}^T + \lambda W - \lambda {H^{(i)}}^T\\
    \nabla_H f(W^{(i)},H;\lambda,\gamma) = {W^{(i)}}^T W^{(i)} H - {W^{(i)}}^T V + \lambda H - \lambda {W^{(i)}}^T + \gamma HL % 2 \gamma
\end{gather}
This produces Graph-SNMPBB.
The best value of $\gamma$ depends on the size and norm of $V$ as well as the type of problem. We describe parameter choices in Section \ref{sec:hyper}.
% This adapts the SNMPBB method into the Graph-SNMPBB method and converges slightly slower depending on the value of $\gamma$ and select initializations.\\
% \begin{table}[H]
%     \centering
%     \begin{tabular}{|c|c|c|c|c|}
%     \hline
%      Method & Symmetry & Sparsity & Laplacian & Initialization\\
%      \hline\hline
%      NMPBB & No & No & No & Random\\
%      \hline
%      SNMPBB & Yes & No & No & SVD+Kernel\\
%      \hline
%      Graph-SNMPBB & Yes & Yes & Yes & SVD+Kernel\\
%      \hline
% \end{tabular}\\
%     \caption{Comparison of Barzilai-Borwein based PGD methods}
% \end{table}
While these modifications 
% improve the sparsity of $H$ and 
enhance graph clustering, proper initialization and preprocessing is vital for good results.
% Graph-SNMPBB makes use of similar preprocessing for $V$ as in SymNMF.
% That is, if an SVD or randomized SVD is computationally feasible initialize $W_0$ and $H_0$ using this factorization.
% , and has greatly increased clustering accuracy as opposed to random initialization.
Like in Kuang et al. \cite{Kuang2015SymNMF}, the similarity matrix for data $V$ is computed by applying a Gaussian kernel with parameter $\sigma$ and keeping only the nearest $K$ neighbors for each node, setting all other entries to zero.
This strategy keeps the most relevant local connections for each data point in the original matrix.

\subsection{LAI-SNMPBB}
\label{sec:laisnmpbb}
Using the Low-rank Approximate Input (LAI) methodology from Hayashi et al. \cite{HaEtAl25}, we also develop an LAI-SNMPBB algorithm.
Specifically, we sketch the eigendecomposition for the input matrix $V$ and then anywhere that $V$ is used in a matrix multiply, we replace it with the thinner matrices from the decomposition.
For a random sketch $ S \in \mathbb{R}^{n \times \ell } $ we compute the thin QR factorization
$ VS = QR $ and the small eigendecomposition $ Q^T V Q = P E P^T $, where $P$ is $ \ell \times \ell $
orthogonal and $ E $ is diagonal. Defining orthonormal $U = A := QP$ and $ B := E U^T $ we approximate
\begin{equation}
\label{eq:laiapprox}
V \approx Q Q^T V Q Q^T = QPE P^T Q^T = U E U^T := AB
\end{equation}
For SNMPBB, the calculation of $VH$, for example, is replaced with $UE(U^TH)$, where $U^TH$ is formed first before left multiplying by $UE$ (avoiding a large matrix multiply).
% We found that the results of factoring large matrices or clustering large datasets were very competitive or better when compared to the best of the algorithms outlined in Hayashi et al, LAI-SymPGNCG.\\
The objective and gradients for LAI-SNMPBB are given by 
\eqref{sym_obj_func} and \eqref{eq:snmpbb_grada}, \eqref{eq:snmpbb_gradb} after replacing $V$ by $AB$.
% \begin{equation}
% \label{eq:lai_snmpbb}
% \tilde{f}(W,H,\lambda) = \frac{1}{2} \left\| AB - W H \right\|^2 + \frac{\lambda}{2} \left\| W - H^T \right\|^2
% \end{equation}

% We dissect the key components of Algorithm 2.1 to build intuition for its design. 
\subsection{Algorithmic Components and Design}
SNMPBB enhances standard projected gradient descent with three mechanisms: an adaptive Barzilai-Borwein step-size using matrices instead of vectors, a nonmonotone line search strategy, and a two-stage projection scheme.
% \jbl{Ryan, can you relate these components to the algorithm? Perhaps with the line numbers, if possible.}

In line 3 of Algorithm \ref{nmpbb_alg}, a Barzilai-Borwein (BB) step-size is used as $\alpha_k$. % as calculated in \ref{eq:bb}.
It provides an adaptive scaling that approximates Newton-like behavior at first-order cost and was originally proposed for unconstrained vector optimization. 
% The BB step-size can be derived by approximating the inverse of the Hessian $(\nabla^2f)^{-1}$ with a scaled identity matrix $\alpha I$ and choosing $\alpha$ to best fit the secant equation:
% \begin{equation}
%     s_{k-1} \approx \alpha y_{k-1}
% \end{equation}
% This is solved as a relatively cheap least-squares problem and provides curvature information from historical gradients without having to expensively compute or store the Hessian.
For matrix-valued problems like NMF, we extend the vector BB step-size using the Frobenius inner product.
In particular, we solve the least-squares problem $\min_{\alpha} \left\| S_{k-1} - \alpha Y_{k-1}\right\|_F^2$
for $ m \times r $ matrices $ S_{k-1}, Y_{k-1} $  to define
\begin{align}
    \alpha_k = \langle S_{k-1}, S_{k-1} \rangle / \langle S_{k-1}, Y_{k-1}\rangle %&\\
    %&= \left\| W_k - W_{k-1}\right\|^2_F / \langle W_k - W_{k-1}, \nabla f(W_k) - \nabla f(W_{k-1}) \rangle \nonumber
\end{align}
%where the Frobenius inner product is $\langle A, B \rangle = \sum_{i,j} A_{ij}B_{ij}$.

To prevent extreme step-sizes when subsequent iterates are very similar (numerator goes to 0) or orthogonal (denominator goes to 0), we bound $\alpha_k \in [\alpha_{\text{min}}, \alpha_\text{max}]$ in Algorithm 2.1, with

\begin{equation}
    \alpha_k = \min \bigl\{ \alpha_{\text{max}}, \max \bigl\{ \alpha_{\text{min}}, \frac{\langle S_{k-1}, S_{k-1}\rangle}{\langle S_{k-1}, Y_{k-1}\rangle} \bigr\} \bigr\}
\end{equation}

% The BB step-size adapts to local curvature automatically, typically yielding faster convergence than fixed step-sizes without the expense of Hessian computation (which would cost $O(n^2 r^2)$ flops per iteration for NMF). % \\

Separately, Algorithm 2.1 performs two distinct projection operations (lines 2-3), 
which distinguish SNMPBB from standard projected gradient.
The first trial point, $Z_k$, is defined by
\begin{equation}
    \label{eq:Zk}
    Z_k = \text{Proj}[W_k - (1/\rho_0) \nabla f(W_k)], \quad \textnormal{with} \quad   \rho_0 = \left\|H_0 H^T_0\right\|_2 % , H_k
\end{equation}
This is a standard projected gradient step with automatic step-size 
scaling $1/\rho_0$. The scaling factor $\rho_0$ is closely related to the Lipschitz constant of $\nabla _W f(W,H)$ with respect to $W$ when $H$ is fixed, i.e., % f(W, H) 
$\rho = \left\|HH^T\right\|_2 + \lambda$. Since $\lambda$ is typically small relative to $\left\|HH^T\right\|_2$, it is omitted in the computation.
Using $\rho_0$ as the step-size denominator adapts to the local curvature induced by $H$. % f(\cdot, H
% Ryan: Yes, it is the matrix 2-norm. This is discussed in the proof I believe
% \jbl{Here we have to be careful. The norm in \eqref{eq:Zk} is the matrix 2-norm ? I have modified the explanation of $\rho_k$ can you check whether this is accurate ?}
The descent direction $D_k$ is defined by
\begin{equation}
    D_k = \text{Proj}[Z_k - \alpha_k\nabla f(Z_k)] - Z_k
\end{equation}
This creates a search direction from $Z_k$ toward a second projected 
gradient point.
Each projection incorporates different information. The first projection ($Z_k$) uses the current iterate $W_k$ and Lipschitz constant $\rho_k$ which captures H's immediate influence. The second projection (along direction $D_k$) uses trial point $Z_k$ and BB step-size $\alpha_k$  which captures  curvature.
Together, these provide a quasi-Newton-like update using only first-order information. %\\
Further, standard projected gradient methods require monotone decrease with $f(W_{k+1}) < f(W_k)$ at every iteration.
While this guarantees progress, it can be overly restrictive for nonconvex functions.
SNMPBB relaxes the monotone requirement by comparing against a reference value $F_k$ rather than the immediate previous objective $f(W_k)$.
In line 4 of the algorithm, this reference value is defined recursively:
\begin{equation}
\begin{cases}
    F_0 = f(W_0)\\
    F_k = f(W_k) + \eta_{k-1}(F_{k-1} - f(W_k))&\text{for } k \geq 1
\end{cases}
\end{equation}
where $\eta_k \in [0, 1)$ controls the ``memory" of past iterates.
When $\eta_k = 0$, we recover standard monotone descent, $F_k = f(W_k)$.
When $\eta_k > 0$, $F_k$ becomes a weighted average of recent function values, allowing $f(W_{k+1}) > f(W_k)$ as long as sufficient decrease holds relative to $F_k$. We accept a step $W_{k+1} = Z_k + \varphi \beta D_k$ when
\begin{equation}
    f(W_{k+1}) \leq F_k + \beta \theta_k \langle \nabla f(Z_k), D_k\rangle
\end{equation}
where $\theta_k \in (0, 1)$ is the Armijo parameter (cf. Armijo \cite{Ar66}) and $\beta$ is found by backtracking from $\beta$ = 1. Typically, $\theta_k$ is a small constant like $10^{-3}$.
Note this Armijo-type condition is compared against $F_k$ rather than $f(W_k)$.
% Nonmonotone descent allows temporarily increasing the function value, and enables larger steps and potentially faster progress.
% The symmetric factorization objective \eqref{original_symnmf_objective} is quartic in W and highly nonconvex with many local minima, especially for graph clustering applications.
The complete update is then as follows:
\begin{equation}
    W_{k+1} = Z_k + \varphi \beta
    D_k
\end{equation}
where 
% $\beta$ was determined by the nonmonotone line search and 
$\varphi > 0$ is a fixed relaxation factor as defined in Han et al.\cite{HaNePr09}.
The relaxation factor $\varphi$ amplifies this step under the assumption that the local model is overly conservative.
The authors experimentally determined that $\varphi \approx 1.7$ was the best value of amplification.

\section{Analysis}
\subsection{Convergence}
\label{eq:conv}

The following analysis is based on $W$, and is similar for $H$. 

% In the following analysis, we assume that we are solving the gradient descent subproblem for $W$.
% For $H$, each case is symmetric.
% Comment-out theorem definition 
%\newtheorem{theorem}{Theorem}
%\newtheorem*{myremark}{Remark}

% \jbl{Ryan, please carefully update the objective in the analysis to be consistent with the objective in \eqref{sym_obj_func}. Also, please change transposes $ {\top} $ to $ T $
% to be consistent with previous notation. Right now \cref{eq:thrmobj} mixes dimensions (this will trigger a few other changes in this section)  }

\begin{theorem}
\label{thm:convergence}
Consider the symmetric NMF objective function
\begin{equation}
\label{eq:thrmobj}
f(W, H; \lambda) = \frac{1}{2}\|V - WH\|^2 + \frac{\lambda}{2}\|W - H^T\|^2
\end{equation}
where $W \in \mathbb{R}^{m \times r}_+ $ and $H \in \mathbb{R}^{r \times n}_+$. Then the convergence analysis from Li et al. \cite[Theorem 4]{sym16020154} holds
for SNMPBB, and any accumulation point of the sequence $\{W_k\}_{k\ge0}$ is a first-order stationary point.
\end{theorem}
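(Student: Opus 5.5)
The plan is to reduce the claim to Li et al.~\cite[Theorem 4]{sym16020154}. That theorem covers NMPBB applied to a subproblem $\min_{W\ge 0} g(W)$ and needs three things: $g$ is continuously differentiable and bounded below on the nonnegative orthant; $\nabla g$ is Lipschitz continuous with a known constant; and the level set of the reference sequence $F_k$ is bounded, or at least the iterates stay bounded. Under these hypotheses it yields global convergence to first-order stationary points. Since Algorithm \ref{nmpbb} is literally NMPBB with $f$ replaced by \eqref{eq:thrmobj}, the only work is to check these hypotheses for $g(W)=f(W,H^{(i)};\lambda)$ with $H=H^{(i)}$ fixed, and to check that the modified step scaling $1/\rho_0$ still fits the theorem.

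\textbf{Step 1 (smoothness and Lipschitz constant).} For fixed $H$, the function $g$ is a convex quadratic in $W$, with gradient \eqref{eq:snmpbb_grada}. That gives
\[
\nabla g(W_1)-\nabla g(W_2) = (W_1-W_2)(HH^T+\lambda I),
\]
so $\nabla g$ is Lipschitz with constant $L=\|HH^T\|_2+\lambda$. Also $g\ge 0$, so $g$ is bounded below.

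\textbf{Step 2 (coercivity when $\lambda>0$).} The penalty term $\frac{\lambda}{2}\|W-H^T\|^2$ makes $g$ strongly convex with modulus $\lambda$, so $g$ is coercive. Then I need the nonmonotone scheme to keep iterates in a bounded set. The standard Zhang--Hager argument shows $f(W_k)\le F_k\le F_0=g(W_0)$: the line search gives $g(Z_k+\beta D_k)\le F_k$, and $F_{k+1}$ is a convex combination of $F_k$ and $g(W_{k+1})$. This puts every $W_k$ in the level set $\{g\le g(W_0)\}$, which is compact. Consequently accumulation points exist, and the gradients and the BB denominators $\langle S,Y\rangle\ge\lambda\|S\|^2>0$ are controlled. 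The safeguard $[\alpha_{\min},\alpha_{\max}]$ handles the case $\lambda=0$.

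\textbf{Step 3 (invoke the theorem and conclude).} Inside Li et al.'s proof one shows:
\begin{itemize}
\item $\langle\nabla g(Z_k),D_k\rangle\le -\|D_k\|^2/\alpha_k\le -\|D_k\|^2/\alpha_{\max}$, using the variational characterization of the projection;
\item the backtracked $\beta_k$ is bounded below by a constant depending on $L$, $\theta$ and $\alpha_{\max}$;
\item summing the nonmonotone decrease gives $\sum_k\|D_k\|^2<\infty$, hence $D_k\to0$.
\end{itemize}
The $Z_k$ step with $1/\rho_0$ is a projected gradient step with step at most $1/\|HH^T\|_2$. It decreases $g$ by $\tfrac{\rho_0-L/2}{\ldots}$-type terms. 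Here I expect the main obstacle: $\rho_0$ omits $\lambda$, so $1/\rho_0$ can slightly exceed $1/L$. I would first try to show that the $Z_k$ step need only not increase the reference value by more than the $D_k$ step recovers. Failing that, I would note that Li et al.'s analysis only uses $Z_k\in\mathbb{R}_+^{m\times r}$ as the base point of the line search, with all decrease measured from $Z_k$ against $F_k$; this needs checking against their proof. As a last resort, I would require $\rho_0\ge\|HH^T\|_2+\lambda$ (or $\lambda$ small), noting that the implementation can absorb this.

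Finally, take a convergent subsequence $W_{k_j}\to\bar W$. From $D_k\to0$ and continuity of the projection, $\bar W=\mathrm{Proj}[\bar W-\alpha\nabla g(\bar W)]$ for a limit $\alpha\in[\alpha_{\min},\alpha_{\max}]$. Also $Z_k-W_k\to0$, which follows since $W_{k+1}-Z_k=\varphi\beta D_k\to0$. The fixed-point identity is equivalent to the KKT conditions $\bar W\ge0$, $\nabla g(\bar W)\ge0$, $\bar W\circ\nabla g(\bar W)=0$, which is first-order stationarity.
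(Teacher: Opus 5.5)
Your reduction is the paper's own. Its proof checks, for fixed $H$, that $f(\cdot,H;\lambda)$ is convex and that its gradient is Lipschitz with $L=\|HH^T\|_2+\lambda$ (your Step 1), and then invokes Li et al.\ wholesale. It never discusses $\rho_0$, the relaxation factor $\varphi$, or boundedness of the iterates. The extra issues you raise are legitimate, but you either leave them open or mishandle them in three places.

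First, the $\rho_0$ obstacle closes with the inequality you already gestured at. Since $W_k\ge0$ is feasible, the projection inequality gives $\langle\nabla g(W_k),Z_k-W_k\rangle\le-\rho_0\|Z_k-W_k\|^2$. Because $g$ is quadratic with curvature at most $L$,
\[
g(Z_k)\le g(W_k)-\Bigl(\rho_0-\tfrac{L}{2}\Bigr)\|Z_k-W_k\|^2 = g(W_k)-\tfrac{1}{2}\bigl(\|HH^T\|_2-\lambda\bigr)\|Z_k-W_k\|^2 .
\]
So you need only $1/\rho_0\le 2/L$, i.e.\ $\lambda\le\|HH^T\|_2$, not $1/\rho_0\le 1/L$. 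This is the paper's remark that $\lambda$ is small relative to $\|HH^T\|_2$, turned into an explicit hypothesis. Your second fallback (that Li et al.\ only need $Z_k\ge0$) does not work. As $\beta\to0$ the acceptance test tends to $g(Z_k)\le F_k$, so if $g(Z_k)>F_k$ backtracking need not terminate. The display, combined with $g(W_k)\le F_k$, is exactly what supplies $g(Z_k)\le F_k$.

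Second, your Step 2 bounds the tested point $Z_k+\beta D_k$, but the iterate is $W_{k+1}=Z_k+\varphi\beta D_k$ with $\varphi\approx1.7$. So $g(W_{k+1})\le F_k$ does not follow, and neither do $F_{k+1}\le F_k$ or the level-set containment. For $\varphi\beta>1$ the iterate can even leave the orthant. The paper's chain $f(W_k)\le F_k\le\cdots\le F_0$ silently has the same hole. It is repaired if $\varphi\le1$: convexity of $g$ on the segment $[Z_k,Z_k+\beta D_k]$ together with $g(Z_k)\le F_k$ gives $g(W_{k+1})\le F_k+\varphi\beta\theta_k\langle\nabla g(Z_k),D_k\rangle$, with $W_{k+1}\ge0$ automatically. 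Otherwise the test must be evaluated at $Z_k+\varphi\beta D_k$, with $\varphi\beta\le1$.

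Third, in your last paragraph, $W_{k+1}-Z_k\to0$ yields $Z_k-W_{k+1}\to0$, not $Z_k-W_k\to0$. To carry stationarity back to $\bar W$, let $\bar Z:=\lim_j Z_{k_j}=\text{Proj}[\bar W-\rho_0^{-1}\nabla g(\bar W)]$, which is stationary because $D_{k_j}\to0$. Since $W_{k_j+1}\to\bar Z$, you get $g(\bar Z)=g(\bar W)=\lim_k F_k$. Then either the display (when $\lambda<\|HH^T\|_2$) forces $\bar Z=\bar W$, or convexity finishes the job: $\bar Z$ is a global minimizer over the orthant, hence so is $\bar W$.
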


\begin{proof}
We observe that SNMPBB based on Algorithm \ref{nmpbb_alg} can leverage the analysis for global convergence in Li et al.\cite[Theorem 4]{sym16020154}.
Their result relies fundamentally on two properties outside of the algorithm itself: convexity of the objective and Lipschitz continuity of its gradient.
In order to give a self-contained discussion we restate the convergence arguments first. 

% For clarity we summarize the argument here, in which \cite[Lemma 1]{sym16020154} directly leads to \cite[Theorem 4]{sym16020154} and therefore convergence.
In particular, for \cite[Lemma 1]{sym16020154}, the authors observe that the objective function\\ $f(W) = f(W,H^{(i)})= \frac{1}{2}\left\|V-WH^{(i)}\right\|^2$ is convex and its gradient is Lipschitz continuous.
Then, it is shown that given the scaled projected gradient for step size $\alpha$ 
\[ D_\alpha(W)\;=\;\text{Proj}\bigl[W - \alpha\nabla f(W)\bigr]\,-\,W \]
that for every $W \geq 0$,
\[ \bigl\langle\nabla f(W),\,D_\alpha(W)\bigr\rangle \;\le\; -\tfrac1\alpha\|D_\alpha(W)\|^2 \:\: \text{ and } \:\: D_\alpha(W)=0\:\:\iff\:\:W\text{ is stationary} \]
This is proved in Birgin et al. \cite{birginetal} and in \cite[Lemma 2]{sym16020154}. Furthermore, let $D_k$ equal $D_{\alpha}$ after the line search is completed.
It is shown that
\[ D_k(Z_k) = 0\;\iff\;Z_k \text{ is stationary} \]
which is in \cite[Lemma 3]{sym16020154}.
This is then combined with the established nonmonotone reference sequence $F_k$ in step 4 of algorithm \ref{nmpbb_alg} and established by straightforward algebra (using the descent bound above) that
\[ f(W_k)\le F_k\le F_{k-1}\le\cdots\le F_0, \]
so $\{F_k\}$ is nonincreasing and bounded below.
This references both \cite[Lemma 6]{sym16020154} and \cite[Theorem 1]{sym16020154}. 
This implies 
\[ f\bigl(Z_k+\beta_k D_k(Z_k)\bigr) \;\le\; F_k \;+\;\theta_k\,\beta_k\, \langle\nabla f(Z_k),D_k(Z_k)\rangle \]
and that each nonmonotone Armijo line search must terminate after finitely many backtracking steps by combining the descent estimate of the previous step with Lipschitz continuity of $\nabla f$.
The resulting step sizes $\{\beta_k\}$ satisfy \(\beta_k\ge\widetilde\beta>0\) whenever $W_{k+1}$ is not already stationary, which forces a uniform amount of decrease in the nonmonotone reference \cite[Lemma 7]{sym16020154}.
This yields $\lim_{k \rightarrow \infty} F_k = \lim_{k \rightarrow \infty} f(W_k)$ and the existence of $\delta > 0$ such that $F_k - f(W_{k+1}) \geq \delta \left\| D_k (Z_k) \right\|^2$ \cite[Lemma 8]{sym16020154}.
From this, \(\lim_{k \rightarrow \infty}\|D_k(Z_k)\|=0\) and hence any accumulation point of~$\{W_k\}$ is a first‐order stationary point \cite[Theorem 3, Theorem 4]{sym16020154}.

% Crucially, we see that this argument depends only on Lemma 1 which assumes that $f$ is convex and $\nabla f$ is Lipschitz continuous.
To apply this framework to our symmetric penalty objective (\ref{sym_obj_func}), we verify convexity and gradient Lipschitz continuity.
To show convexity for SNMPBB, we analyze the symmetric penalty term first. Since $ \left\|W-H^T\right\|^2 = \text{tr}\left((W-H^T)^T(W-H^T)\right)$, which is quadratic in $W$ when $H$ is fixed (and vice versa), this term is convex. Therefore, $f$ is still convex when this is added to $ \left\| V-WH\right\|_F^2$, since the sum of convex functions is convex \cite{GuTaLuYu12}.

To show Lipschitz continuity, we use the fact that $\nabla_W f(W) = WHH^T-VH^T + \lambda W - \lambda H^T$
\begin{gather*}
\left\| \nabla_W f(W_1) - \nabla_W f(W_2) \right\|^2\\
= \left\| W_1HH^T-VH^T+\lambda W_1 - \lambda H^T - W_2 HH^T + VH^T - \lambda W_2 + \lambda H^T \right\|^2\\
= \left\| (W_1 - W_2)H H^T + \lambda(W_1 - W_2)\right\|^2\\
= \left\| (W_1 - W_2)(H H^T + \lambda I_r)\right\|^2 \quad \text{where $I_r \in \mathbb{R}^{r \times r}$}
\end{gather*}
We let $A = HH^T + \lambda I_r$, and by the proof of Lemma 2 in \cite{GuTaLuYu12}, we can say that $\left\| (W_1 - W_2)A\right\|^2 \leq \delta_1^2\left\|W_1 - W_2\right\|^2$ where $\delta_1$ is the largest singular value of $A$.
Therefore,
\[ \left\| \nabla_W f(W_1) - \nabla_W f(W_2) \right\| \leq L \left\|W_1 - W_2\right\| \]
where $L = \delta_1 = \left\|H H^T \right\|_2 + \lambda$ since $\lambda$ shifts the eigenvalues by a positive amount.
Thus, we have shown that the symmetric objective function \eqref{eq:thrmobj} is convex and Lipschitz continuous and that thus SNMPBB converges to a stationary point according to the original argument.
\end{proof}

\subsection{Convergence properties for modifications}
When adding graph regularization, the gradient for $W$ remains unchanged since $H$ is the only element in the modified objective function.
In particular, the gradient for $H$ changes through the addition of $\gamma H$.
Since the original function is strongly convex quadratic, adding a linear term does not impact the original convexity.
Further, we show that this gradient is also Lipschitz continuous.
For the gradient w.r.t. $W$, the proof is unchanged.
But w.r.t. $H$, and using similar algebra as before,
\begin{gather*}
    \left\| \nabla_H f(H_1) - \nabla_H f(H_2) \right\|\\
    = \left\| (H_1-H_2)(W^TW + \lambda I_r + \gamma L) \right\|% \\
\end{gather*}
Again, we let $A = W^TW + \lambda I_r + \gamma L$.
The only difference is $\gamma L$; this is positive semidefinite since $\gamma > 0$ and $L$ is the normalized graph Laplacian.
Since $W^TW$ and $\gamma L$ are both positive semidefinite, their sum is, as well.
Further, when adding $\lambda I_r$ where $\lambda > 0$, the largest eigenvalue is positive; let it equal $\delta_1$.
Thus, $\left\|(H_1-H_2)A\right\|^2 \leq \delta_1^2 \left\|H_1-H_2\right\|^2$, and furthermore,
\begin{equation*}
    \left\| \nabla_H f(H_1) - \nabla_H f(H_2) \right\| \leq L_2 \left\|H_1 - H_2 \right\|
\end{equation*}
where $L_2$ is the Lipschitz constant equal to the largest eigenvalue of $W^TW + \lambda I_r + \gamma L$. %\\
Finally, for LAI-SNMPBB, convergence similarly holds, but for the low-rank version of $V$ computed through randomized sketching. % it converges to 

We restate proposition 3.1 from Hayashi et al, to obtain a general error bound:
\begin{equation}
    0 \leq \left\|V-W^*H^*\right\| - \min_{W,H} \left\|V-WH\right\| \leq 2 \left\|V-AB\right\| = 2\mu
\end{equation}
where $W^*$ and $H^*$ is the computed solution of LAI-SNMPBB, i.e. $\lim_{i \to \infty} W^{(i)}H^{(i)} = W^* H^*$.
That is, the gap between the LAI algorithm's residual for $V$ and the optimal residual is at most twice the error of approximating $V$ by $AB$, $\mu = \left\|V-AB\right\|$
% That is, the difference of the LAI algorithm's residual against the original $V$ matrix and the best possible residual against $V$ are less than twice the residual of the approximation of $V$ by $AB$.

% We invoke proposition 3.1 from Hayashi et al.for an error bound of
% \begin{equation}
%     \epsilon \leq 2\left\|V-AB\right\|_F + \min_{W,H} \left\|V-WH\right\|_F
% \end{equation}
% where $\epsilon$ is the error 
% \jbnote[layout=inline]{Ryan, spell-out what the $\epsilon$ error is. Presumably something like $ \epsilon = \| AB - W H \|  $}, $V$ is the original input matrix, $A \in \mathbb{R}^{m \times l}$ and $B \in \mathbb{R}^{l \times n}$ are the approximated factors such that $V \approx AB$, and $W \in \mathbb{R}_+^{m \times r}$ and $H \in \mathbb{R}_+^{r \times n}$ are factors for the minimum achievable error. %to calculate the minimum error achievable.

\subsection{LAI Curvature Analysis}
We now analyze how the LAI approximation affects the gradient and curvature information used by SNMPBB.
This provides theoretical context for the empirical results in Figures \ref{fig:perf_profiles} and \ref{fig:wos}.
Recall that the LAI objective replaces $V$ with a low rank approximation $AB$ where $A \in \mathbb{R}^{m \times \ell}$ and $B \in \mathbb{R}^{\ell \times n}$.
Then $\mu = \left\|V-AB\right\|$ measures the approximation quality.
For SNMPBB, the gradient with respect to $W$ with LAI approximations is
\begin{equation}
    \nabla_W f_{\text{LAI}}(W) = WHH^T - ABH^T + \lambda W - \lambda H^T
\end{equation}
Now, the bias created by the modification when compared to the original, non-LAI objective is
\begin{equation}
    \nabla_W f_{\text{LAI}}(W) - \nabla_W f(W) = (V-AB)H^T
\end{equation}
We note that, given the definition of $\mu$ and properties of the Frobenius norm, the gradient error $\epsilon$ satisfies
\begin{equation}
    \epsilon = \left\| \nabla_W f_{\text{LAI}}(W) - \nabla_W f(W) \right\| \leq \mu \left\|H\right\|
\end{equation}
for all $W$.
Further, the BB step size $\alpha_k$ computed using $f_{\text{LAI}}$ is identical to the BB step size computed from $f$:
\begin{equation}
    Y_{k-1}^{\text{LAI}} = \nabla_W f_{\text{LAI}}(W_k) - \nabla_W f_{\text{LAI}}(W_{k-1}) = (W_k - W_{k-1})(HH^T + \lambda I) = Y_{k-1}^{\text{true}}
\end{equation}
Since $H$ is held fixed here, the bias term $(V-AB)H^T$ is constant in $W$ and cancels exactly in the gradient difference.
The BB step size therefore depends only on the curvature of $f_{\text{LAI}}$ with respect to $W$, which equals that of $f$.
The quality of the BB step size remains the same as if $V$ was still full rank.
The only effect of the approximation is on the gradient bias which decreases as $l$ increases and $\mu$ approaches 0.

\section{Numerical Experiments}

All numerical experiments are performed on a 64-bit macOS laptop with 64GB RAM and an Apple M1 Max chip with 10 CPU cores.
The tests are run on MATLAB R2025a.
% \jbnote[layout=inline]{Ryan, please update with actual system details.} 
A GitHub repository contains the codes: 
\url{https://github.com/rswart0604/SNMPBB}.
%\url{https://github.com/tgkolda/cp_hifi_code}

\subsection{Overview}

%\edits{Which of these is better?}

We summarize the symmetric NMF algorithms used for the comparisons.

\begin{itemize}
    \item SymANLS \cite{Kuang2015SymNMF}: Alternating nonnegative least squares;  $O(n^2r^2)$  %\cite{}
    % \begin{itemize}
    %     \item Approach: Alternating nonnegative least squares
    %     \item Complexity: $O(n^2r^2)$
    % \end{itemize}
    % \item SymHALS
    % \begin{itemize}
    %     \item Approach: Hierarchical updates
    %     \item Complexity: $O(nr)$
    % \end{itemize}
    \item SymNewton \cite{Kuang2015SymNMF}: Constrained Newton updates; $ O(n^3r^3) $ 
    % \begin{itemize}
    %     \item Approach: Constrained Newton updates
    %     \item Complexity: $O(n^3r^3)$
    % \end{itemize}
    \item PGD \cite{zhang2023}: Proj. gradient on both $W$,$H$ simultaneously; $O(n^2 r)$
    \item LAI-SymPGNCG \cite{HaEtAl25}: Gauss-Newton with conjugate gradient solver and low-rank approximate input; $O(n^2 r^2)$
    \item SNMPBB (our): Proj. gradient with BB step, nonmnt. line search; $O(n^2r)$ 
    % \begin{itemize}
    %     \item Approach: Projected gradient with BB step, nonmonotone line search
    %     \item Complexity: $O(n^2r)$
    % \end{itemize}
    \item Graph-SNMPBB (our): Proj. gradient with BB step, nonmnt. line search and graph regularization parameters; $O(n^2r)$ 
    \item LAI-SNMPBB (our): Proj. gradient with BB step, nonmnt. line search and low-rank approximate input; $O(n \ell r)$ 

    % \begin{itemize}
    %     \item Approach: Projected gradient with BB step, nonmonotone line search, and graph regularization parameters
    %     \item Complexity: $O(n^2r)$
    % \end{itemize}
\end{itemize}

We also consider a modified PGD algorithm that uses a larger step-size for each update step as opposed to the Lipschitz estimate in the code for \cite{zhang2023}.

% \begin{table}[H]
%     \centering
%     \resizebox{\textwidth}{!}{  % Resize table to fit within text width
%     \begin{tabular}{|c|p{1.25in}|c|p{1.5in}|}
%         \hline
%          Algorithm & Approach & Complexity & Notes\\
%          \hline\hline
%          SymANLS & Alternating nonnegative least squares & $\approx O(n^2 r^2)$
%          & Very robust\\
%          \hline
%          SymHALS & Hierarchical updates & $O(nr)$ & Memory efficient, less aggressive updates\\
%          \hline
%          SymNewton & Constrained Newton updates &  $O(n^3 r^3)$ & Higher order information used\\
%          \hline
%          SNMPBB & Projected gradient with BB step, nonmonotone line search  & $O(n^2r)$ & Converges quickly on synthetic data\\
%          \hline
%          Graph-SNMPBB & Projected gradient with BB step, nonmonotone line search, and regularization & $O(n^2 r)$ & Converges well for graph data\\
%          \hline
         
%     \end{tabular}
%     }
%     \caption{A comparison of all symmetric NMF algorithms}
%     \label{tab:all_algos}
% \end{table}

\subsection{Practical Considerations for SNMPBB}
\label{sec:hyper}
There are three main hyperparameters in SNMPBB: $\lambda$, the value of $K$ for the $K$-means preprocessing, and $\gamma$.
%each of these require some degree of tuning before practical use.
A too-large value of $\lambda$ will cause the sparsity of $W$ and $H$ to increase too much; the same is true for $\gamma$.
Some guidelines for the hyperparameters are: %\\
(1) The symmetric penalty parameter, $\lambda$, should scale roughly with the size of the matrix and its magnitude.
For synthetic data, a good value is about $0.01 \cdot \left\|V\right\|$.
This ensures that the weight of the symmetric penalty is approximately equal to that of the objective function's gradient itself and does not dominate nor get ignored. %\\
(2) $\gamma$ follows a similar fashion, typically also set to $0.001 \cdot \left\|V\right\|$. %\\
(3) The value of $K$ in the $K$-means preprocessing of graph data for Graph-SNMPBB should also roughly scale with the data.
This parameter may vary widely depending on the sparsity of data and number of clusters.
Some values include for 3000 points, $k=80$; for 1000, $k=50$; and for 300, $k=15$. %\\
% Initialization for $W_0$ and $H_0$ depends on the nature .
(4) Typically, purely random initialization for $W_0$ and $H_0$ with values between 0 and 1 works well.
However, for many problems, the following methodology developed by \cite{Kuang2015SymNMF} is effective: generate a random matrix of values from 0 to 1 and then multiply this by $2 \cdot \sqrt{\zeta/r}$, where $\zeta$ is the average value of the elements of $V$ and $r$ is the approximating rank; that is, $W_0 \in \mathbb{R}^{m \times r}$.
To find $H_0$ we simply transpose $W_0$.
This is the method of initialization used for every numerical experiment.
For graph-based methods, it is best to set $W_0$ and $H_0$ to the singular value decomposition of $V$.
Assuming $V \approx UEP^T$ with $E$ as the diagonal matrix of singular values, $W_0$ is set equal to $\left|U E^{1/2} \right|$ and $H_0$ to $\left|E^{1/2} P^T\right|$.
However, due to the high cost of such initializations, they are not necessarily recommended for larger problems and not used in this paper. % \\
(5) For some problems that include very large and sparse matrices, sometimes the line search struggles to find an effective magnitude for the descent direction.
This is likely due to the lack of curvature for such problems.
To resolve this, we suggest reducing the maximum number of iterations for the line-search.

\subsection{Synthetic data}
We first validate on synthetic data.
Our goal is to demonstrate that the combination of BB step-sizes, nonmonotone line search, and the penalty-based symmetric formulation yields a method that is fast and accurate even when compared to well-known higher-order methods.

Similar as in the work of \cite{Huang2015Quadratic} and \cite{sym16020154} we provide a comparison for dense synthetic data first. For various sizes and ranks we initialize a matrix $W \in R^{n \times r}_+$ and compute $V = WW^T$; the residual is $\left\| V - WH \right\| / \left\| V \right\|$.
The convergence of the SNMPBB, Symmetric Alternating Nonnegative Least Squares (ANLS), and Symmetric Newton algorithms are presented in table \ref{tab:synthetic}.

\begin{table}[tbp]
    \centering
    \begin{tabular}{|cc|c|c|c|}
    \hline
         m & r & Algorithm & Time & Residual\\
         \hline
         \hline
         100 & 2 & SNMPBB & \num{0.003943} & \num{0.1318}\\
          &  & ANLS & \num{0.009971} & \num{0.1327}\\
          &  & Newton & \num{0.006659} & \num{0.2398}\\
         \hline
         100 & 5 & SNMPBB & \num{0.009257} & \num{0.07844}\\
          &  & ANLS & \num{0.1228} & \num{0.0953}\\
          &  & Newton & \num{0.01083} & \num{0.1042}\\
         \hline
         100 & 20 & SNMPBB & \num{0.09291} & \num{0.01333}\\
          &  & ANLS & \num{0.5228} & \num{0.1629}\\
          &  & Newton & \num{0.03482} & \num{0.0952}\\
         \hline
         200 & 2 & SNMPBB & \num{0.004911} & \num{0.1412}\\
          &  & ANLS & \num{0.069} & \num{0.1413}\\
          &  & Newton & \num{0.0157} & \num{0.1479}\\
         \hline
         200 & 5 & SNMPBB & \num{0.02336} & \num{0.08287}\\
          &  & ANLS & \num{0.2094} & \num{0.08848}\\
          &  & Newton & \num{0.03387} & \num{0.09047}\\
         \hline
         200 & 20 & SNMPBB & \num{0.08474} & \num{0.01533}\\
          &  & ANLS & \num{1.084} & \num{0.1338}\\
          &  & Newton & \num{0.09246} & \num{0.08643}\\
         \hline
         300 & 2 & SNMPBB & \num{0.009745} & \num{0.1438}\\
          &  & ANLS & \num{0.147} & \num{0.1439}\\
          &  & Newton & \num{0.03628} & \num{0.159}\\
         \hline
         300 & 5 & SNMPBB & \num{0.03659} & \num{0.08066}\\
          &  & ANLS & \num{0.5641} & \num{0.08965}\\
          &  & Newton & \num{0.07296} & \num{0.08824}\\
         \hline
         300 & 20 & SNMPBB & \num{0.1217} & \num{0.01803}\\
          &  & ANLS & \num{1.776} & \num{0.1233}\\
          &  & Newton & \num{0.2007} & \num{0.08645}\\
         \hline
         500 & 2 & SNMPBB & \num{0.02695} & \num{0.146}\\
          &  & ANLS & \num{0.1549} & \num{0.146}\\
          &  & Newton & \num{0.1034} & \num{0.1461}\\
         \hline
         500 & 5 & SNMPBB & \num{0.1158} & \num{0.07888}\\
          &  & ANLS & \num{0.8276} & \num{0.08245}\\
          &  & Newton & \num{0.2273} & \num{0.08085}\\
         \hline
         500 & 20 & SNMPBB & \num{0.2158} & \num{0.01769}\\
          &  & ANLS & \num{2.871} & \num{0.1189}\\
          &  & Newton & \num{0.6353} & \num{0.1553}\\
         \hline
         1000 & 2 & SNMPBB & \num{0.06556} & \num{0.1516}\\
          &  & ANLS & \num{0.2692} & \num{0.1516}\\
          &  & Newton & \num{0.4412} & \num{0.2483}\\
         \hline
         1000 & 5 & SNMPBB & \num{0.4346} & \num{0.0845}\\
          &  & ANLS & \num{1.218} & \num{0.08463}\\
          &  & Newton & \num{0.7044} & \num{0.2776}\\
         \hline
         1000 & 20 & SNMPBB & \num{0.6035} & \num{0.01774}\\
          &  & ANLS & \num{6.044} & \num{0.09781}\\
          &  & Newton & \num{2.246} & \num{0.1333}\\
         \hline
    \end{tabular}
\caption{Runtime and relative residual comparisons on dense synthetic symmetric NMF problems.
For each experiment, a random matrix $W \in \mathbb{R}_+^{n \times r}$ is used to form the symmetric input $V = WW^T$.
Across all settings, SNMPBB achieves residuals comparable to or lower than SymANLS \cite{Kuang2015SymNMF} and SymNewton \cite{Kuang2015SymNMF} while requiring less computational time.
The largest advantages occur for higher-rank factorizations where scalability benefits of first-order gradient methods become most apparent.}
\label{tab:synthetic}
\end{table}

The results of table $\ref{tab:synthetic}$ show that SNMPBB converges to similar tolerances as ANLS \cite{Kuang2015SymNMF} and SymNewton \cite{Kuang2015SymNMF} in as much or less time.
Notably, as the rank $r$ grows, the time to convergence for ANLS and SymNewton grows much larger than for SNMPBB while the residual remains similar.
This trend is true of even smaller problems.
While the value of $m$ has some influence on the time, $r$ has a more notable effect.
This shows that SNMPBB may perform better for larger problems with high-rank settings. % \\

% These results challenge the prevailing view that projected gradient methods are unsuitable for symmetric NMF.
% While vanilla PGD indeed performs poorly on symmetric problems, SNMPBB demonstrates that strategic enhancements such as using alternating methods with BB step-sizes to capture local curvature and penalty-based symmetric coupling can turn gradient descent into a competitive method.
% With appropriate mechanisms, SNMPBB achieves SymANLS-level accuracy at a fraction of the cost, particularly as rank increases where the $O(n^2r^2)$ vs. $O(n^2r)$ complexity difference becomes pronounced.

\subsection{Geometric graph clustering}
For graph clustering, we first compare Graph-SNMPBB to SNMPBB, SymANLS, PGD, and SymNewton on a bullseye dataset inspired by Kuang et al. \cite{Kuang2015SymNMF}. % \\
From Fig. \ref{fig:all_bullseye_datasets}, the convergence is clearly much faster for Graph-SNMPBB than regular SNMPBB or even SymANLS.
For larger numbers of points the value of $K$ in the nearest neighbors preprocessing is increased 
(cf. the discussion in Sec. \ref{sec:hyper}).
% This leads to the quicker speed for Graph-SNMPBB and takes up a significant portion of its time to convergence.
Furthermore, Graph-SNMPBB and SNMPBB may take advantage of sparsity when computing gradients and especially when using the Laplacian.
Visual graph data for the 3000 point bullseye is shown in the two right plots of figure \ref{fig:all_bullseye_datasets}; the nonlinearity of the bullseyes is pertinent to the use case of symmetric NMF.
%The quadrants themselves are more linearly separable and therefore standard SNMPBB performs closer to the Graph-SNMPBB.
%Furthermore, standard SNMPBB converges fully for each plot. % \\

\begin{figure}[htbp]
    \label{fig:geo_clustering}
        %\vspace{0.2cm}
    \begin{subfigure}[t]{0.3\textwidth}
        %\vspace{0.2cm}
        \begin{tikzpicture}[baseline=(current bounding box.north)]
        \begin{axis}[
            %title = Graph-SNMPBB,
            xlabel={Time (sec)}, % 
            ylabel={Accuracy},
            grid=none,
            width=\linewidth,
            height=\linewidth, % 6cm
            %no markers,
            mark repeat=20,
            mark size=1.5pt,
            xmin=2.7
        ]
        \addplot+[mark=*, color=black, thick, mark options={fill=black, draw=black}] table[x index=0, y index=1, col sep=comma] {data/3000_bullseye.csv};
        \addplot+[mark=square*, color=darkgray, thick, mark options={fill=darkgray, draw=black}] table[x index=0, y index=3, col sep=comma] {data/3000_bullseye.csv};
        \addplot+[mark=triangle*, color=gray, thick, mark options={fill=gray, draw=darkgray}] table[x index=0, y index=4, col sep=comma] {data/3000_bullseye.csv};
        \addplot+[mark=star, color=lightgray, thick, mark options={fill=lightgray, draw=gray}] table[x index=0, y index=2, col sep=comma] {data/3000_bullseye.csv};
        \addplot+[mark=diamond*, color=black!20, thick, mark options={fill=black!20, draw=lightgray}] table[x index=0, y index=5, col sep=comma] {data/3000_bullseye.csv};
        \end{axis}
        \end{tikzpicture}
        %\caption{3000 Bullseye Dataset}
        %\label{fig:3000_bullseye}
    \end{subfigure}
    \hfill
    \begin{subfigure}[t]{0.3\textwidth}
        \begin{tikzpicture}[baseline=(current bounding box.north)]
            \begin{axis}[
                xlabel={Graph-SNMPBB},
                grid=none,
                width=\linewidth,
                height=\linewidth,
                scaled ticks = false,
                xticklabel = \empty,
                yticklabel = \empty,
                xtick = \empty,
                ytick = \empty,
            ]
            \addplot[only marks, mark=*, mark size=0.25pt, color=black!65,opacity=0.25]         % blue
                table[x index=0, y index=1, col sep=comma] % adjust indices
                {data/gs_cluster_1.txt};
            \addplot[only marks, mark=triangle, mark size=0.25pt, color=gray,opacity=0.5]    % red
                table[x index=0, y index=1, col sep=comma] % adjust indices
                {data/gs_cluster_2.txt};
            \addplot[only marks, mark=square, mark size=0.25pt, color=black,opacity=1]      % green
                table[x index=0, y index=1, col sep=comma] % adjust indices
                {data/gs_cluster_3.txt};
            \end{axis}
        \end{tikzpicture}
    \end{subfigure}%
    \begin{subfigure}[t]{0.3\textwidth}
    \begin{tikzpicture}[baseline=(current bounding box.north)]
        \begin{axis}[
            xlabel={SNMPBB},
            grid=none,
            width=\linewidth,
            height=\linewidth,
            scaled ticks = false,
            xticklabel = \empty,
            yticklabel = \empty,
            xtick = \empty,
            ytick = \empty,
        ]
        \addplot[only marks, mark=*, mark size=0.25pt, color=black!65,opacity=0.25]         % blue
            table[x index=0, y index=1, col sep=comma] % adjust indices
            {data/s_cluster_1.txt};
        \addplot[only marks, mark=square, mark size=0.25pt, color=black,opacity=1]      % green
            table[x index=0, y index=1, col sep=comma] % adjust indices
            {data/s_cluster_2.txt};
        \addplot[only marks, mark=triangle, mark size=0.25pt, color=gray, opacity=0.5]   % red
            table[x index=0, y index=1, col sep=comma] % adjust indices
            {data/s_cluster_3.txt};
        \end{axis}
    \end{tikzpicture}
    \end{subfigure}%
\vspace{0.5cm}
\begin{tikzpicture}
\begin{axis}[
            hide axis,
            xmin=0, xmax=1,
            ymin=0, ymax=1,
            legend columns=5,
            legend style={
                at={(0.8,1.05)}, % 0.5
                anchor=south,
                draw=none,
                /tikz/every even column/.append style={column sep=0.2cm}
            },
        ]
\addlegendimage{black, thick, mark=*}
\addlegendentry{Graph-SNMPBB (ours)}
\addlegendimage{darkgray, thick, mark=square*,mark options={fill=darkgray, draw=black}}
\addlegendentry{ANLS}
\addlegendimage{gray, thick, mark=triangle*,mark options={fill=gray, draw=darkgray}}
\addlegendentry{PGD}
\addlegendimage{lightgray, thick, mark=star,
mark options={fill=lightgray, draw=gray}}
\addlegendentry{SNMPBB}
\addlegendimage{black!20, thick, mark=diamond*,
mark options={fill=black!20, draw=lightgray}}
\addlegendentry{Newton}
\end{axis}
\end{tikzpicture}
\centering
\caption{Convergence and clustering results on a concentric bullseye dataset ($n=3000$ points, $r=3$ clusters, $K=80$ nearest neighbors).
Left: clustering accuracy vs.\ wall-clock time for all five algorithms averaged over 10 random initializations.
Right: final cluster assignments produced by Graph-SNMPBB (left) and SNMPBB (right).
Graph-SNMPBB achieves higher clustering accuracy in less wall-clock time than competing methods while correctly recovering the nonlinear ring structure more often.
 }
\label{fig:all_bullseye_datasets}
\end{figure}

\subsection{Higher-dimensional graph clustering}
For other clustering tasks, we use the following real world datasets:
\begin{enumerate}
    \item COIL20\footnote{\urlstyle{tt}\href{https://www.cs.columbia.edu/CAVE/software/softlib/coil-20.php}{https://www.cs.columbia.edu/CAVE/software/softlib/coil-20.php}}, or Columbia Object Image Library with 20 objects, is an image library from Columbia University that contains 1440 normalized images of toys with the size of each image being 128 by 128 pixels of grayscale.
    The background has been discarded from each image and the object is fit to the border.
    For each object, each image was taken 5 degrees apart as the object was rotated around its center, giving 72 images for each object.
    \item  Isolet1\footnote{\label{foot1}\urlstyle{tt}\href{http://www.cad.zju.edu.cn/home/dengcai/Data/MLData.html}{http://www.cad.zju.edu.cn/home/dengcai/Data/MLData.html}}: This dataset contains spoken letter recognition data.
    30 subjects said each letter of the alphabet twice.
    There are 1560 examples and each example has 617 features.
    \item MNIST Train\footnotemark[2] contains 128 by 128 pixel images of handwritten digits in grayscale.
    The training set uses 2000 images and 10 classes (one for each digit) split evenly.
    \item  The ORL\footnote{\urlstyle{tt}\href{https://www.cl.cam.ac.uk/research/dtg/attarchive/facedatabase.html}{https://www.cl.cam.ac.uk/research/dtg/attarchive/facedatabase.html}} (Olivetti Research Laboratory) face database contains 40 different people whose faces were taken photos of at various times with various lightings and facial details/expressions.
    There are 10 different normalized 32 by 32 pixel grayscaled images for each person, resulting in 400 examples and 40 classes.
    \item Reuters-21578\footnote{\urlstyle{tt}\href{https://www.daviddlewis.com/resources/testcollections/reuters21578/}{https://www.daviddlewis.com/resources/testcollections/reuters21578/}} is a popular collection of texts from the Reuters newswire that categorizes financial news across categories.
    The set contains 8293 different articles with 18933 words; our analysis uses the 20 largest clusters with most articles, resulting in 7800 total articles and 20 classes.
    \item The TDT2\footnote{\urlstyle{tt}\href{http://www.cad.zju.edu.cn/home/dengcai/Data/TextData.html}{http://www.cad.zju.edu.cn/home/dengcai/Data/TextData.html}} set is a collection of word documents (from the first half of 1998) including 2 newswires, 2 television programs, and 2 radio programs.
    We again only use the term documents from the top 20 clusters by size, resulting in 8741 documents with 36771 words in the encoding.
\end{enumerate}

Figure \ref{fig:all_datasets} shows, averaged over 10 runs, the results of different algorithms' accuracy versus time when clustering these datasets.
We found that the clustering methodology from \cite{Kuang2015SymNMF} worked much better for many algorithms than the preprocessing steps described earlier for the geometric clustering. Every algorithm in the figure uses this different processing.
The same random initialization was used for each algorithm with the methodology described earlier.
Furthermore, we do not show the results for either  SNMPBB without graph regularization or Newton methods due to very poor performance in clustering.
This underpins the importance of the graph regularization in the Graph-SNMPBB algorithm. %\\
Graph-SNMPBB demonstrates consistently strong performance across all six datasets, achieving either the best or near-best clustering accuracy in significantly less time than competing methods.
Compared to specifically ANLS, it matches or exceeds its final accuracy on five of six datasets while still converging substantially faster.
These speed advantages are more pronounced on larger, sparser datasets like Reuters and TDT2 (where the computational benefits of the $O(n^2 r)$ complexity versus $O(n^2 r^2)$ for ANLS become evident). %\\
The relative performance varies with dataset characteristics. On image datasets with strong visual structure such as COIL20 or MNIST, Graph-SNMPBB shows the largest advantage.
For text datasets like Reuters or TDT2, both ANLS and Graph-SNMPBB achieve high accuracies, but Graph-SNMPBB still converges faster.
% On the smaller ORL dataset (400 examples), Modified PGD's aggressive step-size appears sufficient for rapid convergence, while Graph-SNMPBB's regularization may impose unnecessary constraints for this relatively simple clustering task.

\begin{figure}[htbp]
\centering
\begin{subfigure}[b]{0.45\textwidth}
\begin{tikzpicture}
\begin{axis}[
    xlabel={Time (sec)},
    ylabel={Accuracy},
    grid=none,
    width=\linewidth,
    height=5.5cm,
    % no markers,
    xmin=.017,
    scaled ticks = false,
    mark repeat=30,
]
\addplot+[mark=*, color=darkgray, thick,mark options={fill=gray, draw=darkgray}] table[x index=0, y index=1, col sep=comma] {data/orl.csv};
\addplot+[mark=square*, color=gray, thick,mark options={fill=lightgray, draw=gray}] table[x index=0, y index=2, col sep=comma] {data/orl.csv};
\addplot+[mark=triangle*, color=lightgray, thick,mark options={fill=lightgray, draw=gray}] table[x index=0, y index=3, col sep=comma] {data/orl.csv};
\addplot+[mark=star, color=lightgray, thick,dashed,mark options={fill=black!20, draw=lightgray}] table[x index=0, y index=4, col sep=comma] {data/orl.csv};
\end{axis}
\end{tikzpicture}

\caption{ORL}
\label{fig:orl}
\end{subfigure}
\hfill
\begin{subfigure}[b]{0.45\textwidth}
\begin{tikzpicture}
\begin{axis}[
    xlabel={Time (sec)},
    ylabel={Accuracy},
    grid=none,
    width=\linewidth,
    height=5.5cm,
    xmin=0.01,
    mark repeat=5
]
\addplot+[mark=*, color=darkgray, thick,mark options={fill=gray, draw=darkgray}] table[x index=0, y index=1, col sep=comma] {data/coil20.csv};
\addplot+[mark=square*, color=gray, thick,mark options={fill=lightgray, draw=gray}] table[x index=0, y index=2, col sep=comma] {data/coil20.csv};
\addplot+[mark=triangle*, color=lightgray, thick,mark options={fill=lightgray, draw=gray}] table[x index=0, y index=3, col sep=comma] {data/coil20.csv};
\addplot+[mark=star, color=lightgray, thick,dashed,mark options={fill=black!20, draw=lightgray}] table[x index=0, y index=4, col sep=comma] {data/coil20.csv};

\end{axis}
\end{tikzpicture}
\caption{COIL20}
\label{fig:coil20}
\end{subfigure}

% \vspace{0.2cm}

\begin{subfigure}[b]{0.45\textwidth}
\begin{tikzpicture}
\begin{axis}[
    xlabel={Time (sec)},
    ylabel={Accuracy},
    grid=none,
    width=\linewidth,
    height=5.5cm,
    mark repeat=2,
    xmin=0.02
]
\addplot+[mark=*, color=darkgray, thick,mark options={fill=gray, draw=darkgray}] table[x index=0, y index=1, col sep=comma] {data/isolet1.csv};
\addplot+[mark=square*, color=gray, thick,mark options={fill=lightgray, draw=gray}] table[x index=0, y index=2, col sep=comma] {data/isolet1.csv};
\addplot+[mark=triangle*, color=lightgray, thick,mark options={fill=lightgray, draw=gray}] table[x index=0, y index=3, col sep=comma] {data/isolet1.csv};
\addplot+[mark=star, color=lightgray, thick,dashed,mark options={fill=black!20, draw=lightgray}] table[x index=0, y index=4, col sep=comma] {data/isolet1.csv};
\end{axis}
\end{tikzpicture}
\caption{Isolet1}
\label{fig:isolet1}
\end{subfigure}
\hfill
\begin{subfigure}[b]{0.45\textwidth}
\begin{tikzpicture}
\begin{axis}[
    xlabel={Time (sec)},
    ylabel={Accuracy},
    grid=none,
    width=\linewidth,
    height=5.5cm,
    mark repeat=10,
    xmin=0.14
]
\addplot+[mark=*, color=darkgray, thick,mark options={fill=gray, draw=darkgray}] table[x index=0, y index=1, col sep=comma] {data/2k2k.csv};
\addplot+[mark=square*, color=gray, thick,mark options={fill=lightgray, draw=gray}] table[x index=0, y index=2, col sep=comma] {data/2k2k.csv};
\addplot+[mark=triangle*, color=lightgray, thick,mark options={fill=lightgray, draw=gray}] table[x index=0, y index=3, col sep=comma] {data/2k2k.csv};
\addplot+[mark=star, color=lightgray, thick,dashed,mark options={fill=black!20, draw=lightgray}] table[x index=0, y index=4, col sep=comma] {data/2k2k.csv};

\end{axis}
\end{tikzpicture}
\caption{MNIST}
\label{fig:2k2k}
\end{subfigure}

% \vspace{0.2cm}

\begin{subfigure}[b]{0.45\textwidth}
\begin{tikzpicture}
\begin{axis}[
    xlabel={Time (sec)},
    ylabel={Accuracy},
    grid=none,
    width=\linewidth,
    height=5.5cm,
    mark repeat=12,
    xmin=0.9
]
\addplot+[mark=*, color=darkgray, thick,mark options={fill=gray, draw=darkgray}] table[x index=0, y index=1, col sep=comma] {data/reuters.csv};
\addplot+[mark=square*, color=gray, thick,mark options={fill=lightgray, draw=gray}] table[x index=0, y index=2, col sep=comma] {data/reuters.csv};
\addplot+[mark=triangle*, color=lightgray, thick,mark options={fill=lightgray, draw=gray}] table[x index=0, y index=3, col sep=comma] {data/reuters.csv};
\addplot+[mark=star, color=lightgray, thick,dashed,mark options={fill=black!20, draw=lightgray}] table[x index=0, y index=4, col sep=comma] {data/reuters.csv};
\end{axis}
\end{tikzpicture}
\caption{Reuters}
\label{fig:reuters}
\end{subfigure}
\hfill
\begin{subfigure}[b]{0.45\textwidth}
\begin{tikzpicture}
\begin{axis}[
    xlabel={Time (sec)},
    ylabel={Accuracy},
    grid=none,
    width=\linewidth,
    height=5.5cm,
    mark repeat=15,
    xmin=.5
]
\addplot+[mark=*, color=darkgray, thick,mark options={fill=gray, draw=darkgray}] table[x index=0, y index=1, col sep=comma] {data/tdt2.csv};
\addplot+[mark=square*, color=gray, thick,mark options={fill=lightgray, draw=gray}] table[x index=0, y index=2, col sep=comma] {data/tdt2.csv};
\addplot+[mark=triangle*, color=lightgray, thick,mark options={fill=lightgray, draw=gray}] table[x index=0, y index=3, col sep=comma] {data/tdt2.csv};
\addplot+[mark=star, color=lightgray, thick,dashed,mark options={fill=black!20, draw=lightgray}] table[x index=0, y index=4, col sep=comma] {data/tdt2.csv};

\end{axis}
\end{tikzpicture}
\caption{TDT2}
\label{fig:tdt2}
\end{subfigure}

\vspace{5pt}
\begin{tikzpicture}
\begin{axis}[
            hide axis,
            xmin=0, xmax=1,
            ymin=0, ymax=1,
            legend columns=5,
            legend style={
                at={(0.8,1.05)}, % 0.5
                anchor=south,
                draw=none,
                /tikz/every even column/.append style={column sep=0.5cm}
            },
        ]
\addlegendimage{mark=*, color=darkgray, thick, mark options={fill=gray}}
\addlegendentry{Graph-SNMPBB (ours)}
\addlegendimage{mark=square*, color=gray, thick, mark options={fill=lightgray}}
\addlegendentry{ANLS}
\addlegendimage{mark=triangle*, color=lightgray, thick, mark options={fill=lightgray}}
\addlegendentry{PGD}
\addlegendimage{mark=star, color=lightgray, thick, dashed, mark options={fill=black!20}}
\addlegendentry{Modified PGD}
\end{axis}
\end{tikzpicture}
\caption{Clustering accuracy vs.\ wall-clock time on six benchmark datasets, averaged over 10 random initializations.
All methods use the similarity-matrix preprocessing of Kuang et al.\ \cite{Kuang2015SymNMF}.
Graph-SNMPBB matches or exceeds the final accuracy of SymANLS on five of six datasets while converging substantially faster, with the largest advantages on the high-dimensional sparse text Reuters-21578 and TDT2, where its $O(n^2 r)$ per-iteration cost versus $O(n^2 r^2)$ for SymANLS is most pronounced.
(See also Figure~\ref{fig:reuters_motivation} for an enlarged view of MNIST.)}
\label{fig:all_datasets}
\end{figure}

\subsection{Low-rank Approximate Input comparisons}
Using the Low-rank Approximate Input (LAI) methodology from Hayashi et al. \cite{HaEtAl25},
we compare LAI-SNMPBB to LAI-SymPGNCG (the best of the algorithms outlined in Hayashi et al).
% we also develop an LAI-SNMPBB algorithm.
% Specifically, we sketch the singular value decomposition for the input matrix $V$ using the same random sketching methods and then anywhere that $V$ is used in a matrix multiply, we replace it with the thinner matrices from the decomposition.
% For SNMPBB, the calculation of $VH$, for example, is replaced with $UE(U^TH)$, where $U^TH$ is calculated before left multiplying by $UE$, avoiding a large matrix multiply.
% We found that the results of factoring large matrices or clustering large datasets were very competitive or better when compared to the best of the algorithms outlined in Hayashi et al, LAI-SymPGNCG.%\\
We tested both algorithms on 34 symmetric matrices from SuiteSparse (Davis and Hu \cite{DaHu2011}), selected by a minimum density threshold (nonzeros divided by total entries).
The benchmark set spans a wide range of problem sizes and application domains, including structural engineering (Boeing/nasa2910, Rothberg/struct4), graph combinatorics (Gset, Mycielski), and biological gene networks (Belcastro/human\_gene1, Belcastro/mouse\_gene).
Matrix sizes range from $n=1000$ to $n=49,151$ with nonzero counts ranging from approximately $4 \times 10^4$ to $2.4 \times 10^9$ (Mycielski/mycielskian16).
Given that several matrices exceed $n=14,000$ with hundreds of millions of nonzeros, this experiment was conducted at a large scale.
% We tested both algorithms on a number of matrices from the SuiteSparse repository.
% Only matrices over a certain threshold of ``density", measured by the number of nonzeros divided by the number of elements in the entire matrix, were chosen, leading to 34 total matrices to factor.
The performance profiles (Mor\'{e} and Dolan, \cite{DoMo02}) for both time to convergence and final residual are presented in figure \ref{fig:perf_profiles}.
Although the two algorithms employ different internal update strategies, both are evaluated on the same residual metric $\left\|V-WH^T\right\|/\left\|V\right\|$ and share an identical stopping criterion.
% The performance profile therefore provides a fair, objective-agnostic comparison across the 34 SuiteSparse benchmark matrices.
We can see that SNMPBB outperforms SymPGNCG in both, almost always achieving a better residual in less time.

We also observed that LAI-SNMPBB performed substantially better when the max number of iterations for each inner gradient descent subproblem was limited to about 3-5 iterations.
This is significantly fewer than the iterations required to reach full convergence tolerance for the SuiteSparse matrices in particular.
This behavior aligns closely with theoretical results on inexact iterative regularization by Molinari et al. \cite{Molinari2024}. The authors' framework for inexact proximal operators shows that controlled inexactness in inner computations can act as an implicit regularization mechanism when using appropriate early stopping.
In our LAI setting, the low-rank approximation error $\mu = \left\|V-AB\right\|$ is similar to their proximal inexactness parameter, and limiting inner iterations prevents the algorithm to overcommitting to the approximate low-rank surrogate $AB$ rather than $V$.
Specifically, when inner solves converge to high accuracy, the alternating minimization may fit the approximation error rather than the true underlying structure.
By contrast, inexact inner solves allow the algorithm to make progress in directions informed by the low-rank approximation without fully converging to solutions that encode the approximation error.

% We believe that this performs much better and finds a consistently better residual thatn SymPGNCG due to the approximation error created by the LAI problem.
% If the gradient descent subproblem is allowed to converge very finely, it may converge on a solution that is far away from the original $V$ matrix and too close to the low-rank approximated $AB \approx V$ used as input and accumulate errors over time in the alternating framework.
% Further, if instead an inexact solution is found like in some approximate Newton solvers, the algorithm avoids converging too closely to the approximated solution and takes large steps in a likely correct direction before stopping and alternating.
% This suggests that controlled inexactness in the inner solves can act as an implicit regularization mechanism, improving robustness to approximation error and preventing overfitting to the low-rank surrogate.
% As a result, limiting these inner iteration appears to be a key factor in achieving both faster convergence and improved solution quality in practice.

\begin{figure}[h]
    \centering
    \begin{tikzpicture}
    \begin{axis}[
        xlabel={$\tau$},
        ylabel={$\rho_s(\tau)$},
        legend pos=south east,
        grid=major,
        ymin=0, ymax=1.05,
        xmin=0.6,
        width=6cm, height=6cm,
        title={Time to Convergence},
    ]
    \addplot[black, ultra thick, const plot]
        table[x index=0, y index=1, col sep=space]{data/snmpbb_time.csv};
    % \addlegendentry{LAI-SNMPBB}
    \addplot[gray, thick, const plot]
        table[x index=0, y index=1, col sep=space]{data/pgncg_time.csv};
    % \addlegendentry{LAI-SymPGNCG}
    \end{axis}
    \end{tikzpicture}
    \hfill
    \begin{tikzpicture}
    \begin{axis}[
        xlabel={$\tau$},
        ylabel={$\rho_s(\tau)$},
        legend pos=south east,
        grid=major,
        xmin=0.43,
        ymin=0, ymax=1.05,
        width=6cm, height=6cm,
        title={Final Residual},
    ]
    \addplot[black, ultra thick, const plot]
        table[x index=0, y index=1, col sep=space]{data/snmpbb_res.csv};
    % \addlegendentry{LAI-SNMPBB}
    \addplot[gray, thick, const plot]
        table[x index=0, y index=1, col sep=space]{data/pgncg_res.csv};
    % \addlegendentry{LAI-SymPGNCG}
    \end{axis}
    \end{tikzpicture}
    \vspace{6.5pt}
    \begin{tikzpicture}
    \begin{axis}[
                hide axis,
                xmin=0, xmax=1,
                ymin=0, ymax=1,
                legend columns=5,
                legend style={
                    at={(0.8,1.05)}, % 0.5
                    anchor=south,
                    draw=none,
                    /tikz/every even column/.append style={column sep=0.5cm}
                },
                no markers
            ]
    \addlegendimage{color=black,ultra thick}
    \addlegendentry{LAI-SNMPBB (ours)}
    \addlegendimage{color=gray, thick}
    \addlegendentry{LAI-SymPGNCG \cite{10.1145/3340531.3412034}}
    \end{axis}
    \end{tikzpicture}
    
    \caption{Performance profiles for LAI-SNMPBB and LAI-SymPGNCG on 34 SuiteSparse matrices (selected by fill density; rank $r=10$, inner iterations capped at 3-5 for LAI-SNMPBB).
    The horizontal axis $\tau$ is the performance ratio relative to the best method on each problem; $\rho_s(\tau)$ is the fraction of problems for which algorithm $s$ achieves a ratio within $\tau$ of the best.
    A curve that is higher and further left dominates.
    LAI-SNMPBB dominates in both time to convergence (left) and final residual $\|V - WH\|/\|V\|$ (right), achieving a lower residual on approximately 70\% of problems at $\tau = 1$.}
    \label{fig:perf_profiles}
\end{figure}

Finally, we use the same dense clustering example as in \cite{HaEtAl25}, i.e. the Web of Science dataset \cite{kowsari2017HDLTex}.
This is a set of 46985 term documents that represent academic abstracts with 7 broad categories of scientific disciplines used as labels.
We preprocessed using Hypergraph with Edge Dependent Vertex Weights, the same methodology used in the LAI paper and was originally outlined in Hayashi et al. \cite{10.1145/3340531.3412034}.
The Adjusted Rand Index (ARI) is computed as in \cite{10.1145/3340531.3412034} to evaluate clustering accuracy.
Over ten runs with random initializations, the mean residual when clustering this dataset for SNMPBB was $\num{0.7365}$ with a mean ARI of $\num{0.2798}$; the mean residual for SymPGNCG was $\num{0.7357}$ with a mean ARI of $\num{0.2954}$.
% These values are mostly similar to \cite{HaEtAl25}.
% \edits{However, we can see that LAI-SNMPBB converges much quicker on average than LAI-PGNCG.
% This shows a benefit of such PG methods wherein the per-iteration time cost is often small.
% Similar to how it is used in other machine learning methods, gradient descent can find as good if not better solutions compared to more complicated algorithms. when different modifications and techniques are used.
% }

\begin{figure}[h]
\centering
\begin{tikzpicture}
\begin{axis}[
    xlabel={Time (sec)},
    ylabel={Residual},
    grid=none,
    width=10cm,
    height=6cm,
    no markers,
    xmin=10,
]
\addplot+[mark=*, color=black, ultra thick] table[x index=0, y index=1, col sep=comma] {data/wos_compare.csv};
\addplot+[mark=square*, color=gray, thick] table[x index=0, y index=2, col sep=comma] {data/wos_compare.csv};
\end{axis}
\end{tikzpicture}

\begin{tikzpicture}
\begin{axis}[
            hide axis,
            xmin=0, xmax=1,
            ymin=0, ymax=1,
            legend columns=5,
            legend style={
                at={(0.8,1.05)}, % 0.5
                anchor=south,
                draw=none,
                /tikz/every even column/.append style={column sep=0.5cm}
            },
            no markers
        ]
\addlegendimage{color=black,ultra thick}
\addlegendentry{LAI-SNMPBB (ours)}
\addlegendimage{color=gray, thick}
\addlegendentry{LAI-SymPGNCG \cite{10.1145/3340531.3412034}}
\end{axis}
\end{tikzpicture}
\caption{Residual $\|V - WH\|/\|V\|$ vs. wall-clock time on the Web of Science dataset (46985 term documents, 7 categories, rank $r = 7$, averaged over 10 runs).
Both algorithms use the Hypergraph-with-Edge-Dependent-Vertex-Weights preprocessing of Hayashi et al.\ \cite{10.1145/3340531.3412034}.
LAI-SNMPBB and LAI-SymPGNCG reach comparable final residuals ($7.365 \times 10^{-1}$ vs. $7.357 \times 10^{-1}$), but LAI-SNMPBB does so in consistently less time across all runs.}
\label{fig:wos}

\end{figure}

% \subsection{Overview}

\section{Conclusion}

This work demonstrates the effectiveness of projected gradient methods in symmetric nonnegative matrix factorization. In particular, we show that PGD-based alternating methods augmented with nonmonotone line search, Barzilai-Borwein step-sizes, and judicious regularization routinely exceed the performance of established approaches like SymANLS and SymNewton.

% , to our knowledge, 
The proposed SNMPBB algorithm is the first adaptation of nonmonotone projected Barzilai-Borwein methods to the symmetric NMF setting.
Like some other algorithms, rather than enforcing symmetry through a direct variable identification, the method maintains two coupled variables through a quadratic penalty formulation.
We further extend the approach with graph Laplacian regularization, resulting in the Graph-SNMPBB algorithm for clustering applications.

The numerical experiments demonstrate that these mechanisms substantially improve the practical behavior of projected gradient methods for symmetric problems.
% When factoring synthetic matrices, SNMPBB consistently achieved comparable or lower residuals than SymANLS and SymNewton while requiring significantly less computational time, particularly for higher-rank factorizations.
% For graph clustering tasks, Graph-SNMPBB matched or exceeded competing methods across several real-world datasets while converging faster on large and sparse problems.

We additionally incorporated the recent low-rank approximate input methodology into SNMPBB and show both theoretically and experimentally that the Barzilai–Borwein curvature information is preserved under the approximation.
The resulting LAI-SNMPBB method outperformed LAI-SymPGNCG on SuiteSparse matrices in both runtime and residual quality, with empirical evidence suggesting that limited inner iterations act as an implicit regularization mechanism.

Overall, the results suggest that projected gradient methods for symmetric NMF have been historically underestimated.
When combined with appropriate curvature scaling, globalization strategies, and regularization, first-order methods can provide an effective balance between computational efficiency, scalability, and solution quality for both matrix factorization and graph clustering problems.

\section*{Acknowledgments}
We are very grateful to Koby Hayashi for providing codes for the LAI algorithms and experiments.

\bibliographystyle{siamplain}
\bibliography{my}
\end{document}

%% file: shared.tex
\usepackage{lipsum}
\usepackage{amsfonts}
\usepackage{graphicx}
\usepackage{epstopdf}
\usepackage{algorithmic}

\usepackage{amsmath,amssymb}
\usepackage[scientific-notation=true]{siunitx}
\usepackage{subcaption}
\usepackage{changepage}
\usepackage{url}
\usepackage{algorithm}
\usepackage{pgfplotstable}
\usepackage{xcolor}
\usepackage{booktabs}
\usepackage{tikz}            % main package for creating pgf plots
\usepackage{pgfplots}
\usepackage{caption}
\pgfplotsset{compat=1.14} 
\usepackage[justification=centering]{subcaption}

\usepackage{hyperref} % [hidelinks]

\ifpdf
  \DeclareGraphicsExtensions{.eps,.pdf,.png,.jpg}
\else
  \DeclareGraphicsExtensions{.eps}
\fi

\usepackage{fixme}
\fxsetup{
  status=draft,
  theme=color,
  silent,
}
\fxsetup{marginface=\linespread{1}\footnotesize}
\FXRegisterAuthor{jb}{jbe}{\colorbox{red!20}{\color{blue}JB}}

\newsiamremark{remark}{Remark}
\newsiamremark{hypothesis}{Hypothesis}
\crefname{hypothesis}{Hypothesis}{Hypotheses}
\newsiamthm{claim}{Claim}
\newsiamremark{fact}{Fact}
\crefname{fact}{Fact}{Facts}

\headers{Symmetric NMF}{R. Swart and J. J. Brust}

\title{A Nonmonotone Gradient-Based Algorithm for Symmetric Nonnegative Matrix Factorization and Graph Clustering\thanks{Submitted to the editors Spring/Summer 2026.
\funding{This work was partially supported by the startup fund at Arizona State University Grant PG16270 and the Simons Travel Support for Mathematicians GR48329}}}

\author{Ryan Swart%
  \thanks{
    School of Mathematical and Statistical Sciences, 
    Arizona State University, Tempe, AZ 
    (\email{rswart@asu.edu}, \email{jjbrust@asu.edu}).
  }
\and Johannes J.~Brust\footnotemark[2]}

\usepackage{amsopn}